\documentclass{article}

\usepackage{microtype}
\usepackage{graphicx}
\usepackage{subcaption}
\usepackage{booktabs} 

\usepackage{algorithm}
\usepackage{hyperref}



\usepackage{amsmath}
\usepackage{amssymb}
\usepackage{mathtools}
\usepackage{amsthm}

\usepackage[capitalize,noabbrev]{cleveref}

\usepackage[textsize=tiny]{todonotes}

\usepackage{url}

\usepackage{multirow} 

\usepackage{nicefrac}       

\usepackage{wrapfig} 

\usepackage{authblk}

\usepackage{cleveref}

\usepackage[preprint]{icml2026}

\theoremstyle{plain}
\newtheorem{theorem}{Theorem}[section]
\newtheorem{proposition}[theorem]{Proposition}
\newtheorem{lemma}[theorem]{Lemma}

\theoremstyle{definition}
\newtheorem{definition}[theorem]{Definition}

\theoremstyle{remark}

\newcommand{\RR}{\mathbb{R}}

\newcommand{\PR}[2]{\mathcal{P}_{#1}(\mathbb{R}^{#2})}
\newcommand{\QT}[2]{\PR{#1}{#2}/{\sim_T}}

\newcommand{\Qset}{\QT{2}{d}}

\newcommand{\GCone}{C(\mathcal{N})}

\newcommand{\Sym}{\mathrm{Sym}}
\newcommand{\Id}{\mathrm{Id}}

\icmltitlerunning{Relative Wasserstein Angle and the Problem of the $W_2$-Nearest Gaussian Distribution}

\begin{document}

\twocolumn[
  \icmltitle{Relative Wasserstein Angle and the Problem of \\ the $W_2$-Nearest Gaussian Distribution}



  \icmlsetsymbol{equal}{*}

  \begin{icmlauthorlist}
    \icmlauthor{Binshuai Wang}{yyy}
    \icmlauthor{Peng Wei}{comp}
  \end{icmlauthorlist}

  \icmlaffiliation{yyy}{Department of Computer Science, George Washington University, Washington D.C., USA}
  \icmlaffiliation{comp}{MAE, George Washington University, Washington D.C., USA}

  \icmlcorrespondingauthor{Binshuai Wang}{derekwang@gwu.edu}
  \icmlcorrespondingauthor{Peng Wei}{pwei@gwu.edu}

  \icmlkeywords{Machine Learning, ICML}

  \vskip 0.3in
]



\printAffiliationsAndNotice{}  

\begin{abstract}
We study the problem of quantifying how far an empirical distribution deviates from Gaussianity under the framework of optimal transport.
By exploiting the cone geometry of the relative translation invariant quadratic Wasserstein space, we introduce two novel geometric quantities, the relative Wasserstein angle and the orthogonal projection distance, which provide meaningful measures of non-Gaussianity.
We prove that the filling cone generated by any two rays in this space is flat, ensuring that angles, projections, and inner products are rigorously well-defined.
This geometric viewpoint recasts Gaussian approximation as a projection problem onto the Gaussian cone and reveals that the commonly used moment-matching Gaussian can \emph{not} be the \(W_2\)-nearest Gaussian for a given empirical distribution.
In one dimension, we derive closed-form expressions for the proposed quantities and extend them to several classical distribution families, including uniform, Laplace, and logistic distributions; while in high dimensions, we develop an efficient stochastic manifold optimization algorithm based on a semi-discrete dual formulation.
Experiments on synthetic data and real-world feature distributions demonstrate that the relative Wasserstein angle is more robust than the Wasserstein distance and that the proposed nearest Gaussian provides a better approximation than moment matching in the evaluation of Fr\'echet Inception Distance (FID) scores.
\end{abstract}

\section{Introduction}

Gaussian distributions play a fundamental role in many areas of machine learning.
A wide range of methods either assume that the underlying distribution is generated directly or implicitly from Gaussian or use Gaussian distributions as convenient approximations to empirical data \citep{jaynes2003probability}.
For example, Gaussian noise is commonly adopted as a prior in generative models, including autoregressive models \citep{oord2016pixelrnn}, Wasserstein GANs \citep{arjovsky2017wgan}, diffusion models (DDPMs) \citep{ho2020denoising}, and flow matching methods \citep{lipman2023flow}.
Moment-matching Gaussian approximations are also widely used in evaluation metrics such as the Fr\'echet Inception Distance (FID), which compares datasets through their associated Gaussian statistics~\citep{heusel2017fid}.
In addition, many approximation methods, most notably Gaussian mixture models (GMMs), rely on Gaussian distributions as basic blocks to represent complex real-world distributions.

Despite their widespread use, there is limited theoretical understanding of how well a Gaussian distribution approximates a given empirical distribution \citep{bobkov2019onedim}. 
Although various statistical quantities, such as discrepancies in higher-order moments, can be used to capture deviations from the Gaussian family \citep{jarque1987test, mardia1970measures}, these measures do not provide a unified notion of distance between an empirical distribution and its Gaussian surrogate.


Optimal transport (OT) theory provides a geometrical perspective for quantifying differences between probability distributions \citep{villani2003topics, villani2009OT}. 
The Wasserstein distance derived from OT endows the space of all probability distributions with a meaningful geometric structure, allowing distributions to be viewed as points in a metric space and compared by their underlying geometry. 
Because of its strong theoretical foundations and interpretability, OT has become a powerful tool for analyzing the difference between distributions, even involving different types of probability distributions. 
From this point, it is naturally leads to the following question:

\emph{From the perspective of optimal transport, given a probability distribution $\mu$, to what extent can it be regarded as a Gaussian distribution?}

Under the $2$-Wasserstein distance, we show that this question can be transformed (reduced) to the problem of finding the minimal orthogonal projection distance from the distribution $\mu$ to the convex cone of Gaussian distributions. 
This geometric viewpoint naturally leads to two measurements, which we term the \emph{relative quadratic Wasserstein angle} ($RW_2$ angle) and the orthogonal projection distance, which can fully quantify the degree of non-Gaussianity for the distribution $\mu$. 
We further demonstrate that both measurements admit closed-form solutions in the one-dimensional case, and can be efficiently computed in the high-dimensional case via dual formulation and manifold optimization.

\paragraph{Contributions.}
The main contributions of this paper are summarized as follows:

\emph{(a)} We establish that the notions of the \emph{relative Wasserstein angle} (the \( RW_2 \) angle), the orthogonal projection distance, and the inner product are well-defined in the \( RW_2 \) metric space. We further show that these quantities provide principled measures for quantifying deviations from the Gaussian family, as well as from other probability families, including the uniform, Laplace, and logistic distributions.

\emph{(b)} For one-dimensional settings, we derive \textit{closed-form} solutions for computing the \( W_2 \)-nearest Gaussian distribution, and other distributions, including the uniform, Laplace, and logistic distributions. For high-dimensional settings, we develop an efficient algorithm to compute the \( RW_2 \) angle and orthogonal projection distance, based on a dual formulation and Riemannian manifold optimization.

\emph{(c)} We show that the moment-matching Gaussian is \textit{not} the \( W_2 \)-nearest Gaussian approximation for a non-Gaussian distribution, by using the nonzero orthogonal projection distance. We also found that the \( W_2 \)-nearest Gaussian approximation can neither share the same set of eigenvalues nor eigenvectors as the moment-matching Gaussian.

\emph{(d)} We find that the relative Wasserstein angle provides a more robust measure of non-Gaussianity than the Wasserstein distance, as it compares \emph{rays} rather than individual \textit{points} in the metric space.

\emph{(e)} We show that our proposed method can provide Gaussian surrogates that are more faithful than the moment-matching Gaussian, providing improved approximations of feature distributions from commonly used datasets.


\paragraph{Organization.}
The remainder of the paper is organized as follows. 
Section~\ref{sec:pre} reviews classical results in optimal transport theory, including the $RW_2$ metric space and the Gaussian convex cone. 
Section~\ref{sec:main} presents the definitions of the filling cone, the angle $RW_2$ and the orthogonal projection distances, and analyzes the solutions in both one-dimensional and high-dimensional settings.
Section~\ref{sec:exp} provides two numerical experiments for validating our theoretical results.

\paragraph{Notations.}
Let $\PR{p}{d}$ denote the set of all probability distributions on $\RR^d$ with \textit{finite} $p$-th order moments.
Let $\bar{\mu}$ and $\Sigma_{\mu}$ be the mean and covariance matrix of empirical distribution $\mu$.
We write $[\mu]$ for the translation equivalence class of $\mu$ and $RW_p([\mu],[\nu])$ for the relative translation invariant Wasserstein distance.
The quotient space $\Qset$ is equipped with the $RW_2$ metric, and
$\|[\mu]\|_{RW_2}$ denotes the distance to the apex.
Rays induced by dilations are denoted by
$[[\mu]] := \{(s\,\mathrm{Id})_\#[\mu]: s>0\}$.
The relative Wasserstein angle between rays $[[\mu]]$ and $[[\nu]]$ is $\angle([[\mu]],[[\nu]])$, and
$p([\mu],[[\nu]])$ denotes the orthogonal projection distance.
$\mathcal N(0,\Sigma)$ denotes a zero-mean Gaussian with covariance $\Sigma$, and
$\GCone := \{\mathcal N(0,\Sigma): \Sigma\succeq0\}$ denotes the Gaussian cone.
The moment-matching Gaussian of $\mu$ is $\mathcal N(\bar\mu,\Sigma_\mu)$. $\widehat{\nabla}$ stands for stochastic gradient.

\paragraph{Related Work}

Quantifying deviation from Gaussianity has a long history in statistics and information theory. 
Classical approaches rely on higher-order moments, such as skewness and kurtosis, to detect asymmetry and tail behavior beyond the Gaussian model \citep{mardia1970measures, jarque1987test, dagostino1990tests}. 
Beyond moment-based criteria, information-theoretic measures characterize non-Gaussianity through entropy and divergence. 
Representative examples include negentropy \citep{hyvarinen2000independent}, which exploits the maximum-entropy property of Gaussian distributions, as well as divergence-based measures such as the Kullback--Leibler divergence \citep{kullback1951information}, Rényi divergences, and related quantities in information geometry \citep{amari2016information}. 
While powerful and widely used, these approaches are typically tied to specific functionals of the distribution, do not induce a metric structure on the space of probability measures, and do not provide a unified geometric notion of distance for different types of distributions.

Optimal transport offers a different perspective by endowing the space of probability measures with a rich metric geometry via the Wasserstein distance \citep{villani2003topics, villani2009OT, santambrogio2015optimal}. 
A number of works have studied Wasserstein distances between general distributions and Gaussian measures, often emphasizing covariance structure or moment-based bounds~\citep{bobkov2019onedim, gelbrich1990formula}.
Related lines of work investigate Gaussian barycenters and approximations in Wasserstein space \citep{agueh2011barycenters, peyre2019computational}, as well as scalable variants such as sliced and projected Wasserstein distances \citep{rabin2011wasserstein, bonneel2015sliced, genevay2019sample, feydy2019interpolating}. 
Despite these advances, it is commonly assumed that the moment-matching Gaussian provides a natural Wasserstein approximation for a non-Gaussian distribution. 
To the best of our knowledge, there has been no prior work demonstrating that this assumption can fail.

\section{Preliminaries}\label{sec:pre}

\subsection{Optimal Transport Theory}
Optimal transport theory studies how to transport one probability distribution to another in the least-cost way, given a specified cost function for moving probability mass across a metric space.
Formally, given a cost function $c(x,y)$ and two probability measures $\mu(x)$ and $\nu(y)$, the goal is to find a joint distribution (or transport plan) $\gamma(x,y)$ that minimizes the total cost of transporting $\mu$ to $\nu$ under $c(x,y)$. 
A common and natural choice for the cost function is a metric-based function of form $c(x,y) = \|x - y\|^p$, where $\|\cdot\|$ is a metric and $p \in [1, \infty)$ \citep{villani2003topics}.

Let $\mu$ denote the source distribution and $\nu$ the target distribution, with $\mu,\nu\in\PR{p}{d}$. 
The $p$-norm optimal transport problem is defined as follows.

\begin{definition}[$p$-norm Optimal Transport Problem \citep{villani2003topics}]
\label{def_OT}
\begin{equation}
\mathrm{OT}(\mu,\nu,p)
:=
\min_{\gamma\in\Gamma(\mu,\nu)}
\int \|x-y\|^p \, d\gamma(x,y),
\end{equation}
where 
$
\Gamma(\mu,\nu)
=
\Big\{
\gamma\in\PR{p}{2d}
\;\big|\;
\gamma\ge 0,\;
\int \gamma(x,y)\,dy=\mu(x),\;
\int \gamma(x,y)\,dx=\nu(y)
\Big\}.
$
\end{definition}

Here, $\gamma(x,y)$ represents a feasible transport plan describing how probability mass is moved from locations $x$ to $y$. 
The objective seeks the coupling that minimizes the total transport cost.

This formulation naturally induces a metric on the space of probability distributions, known as the Wasserstein distance \citep{villani2009OT}.

\begin{definition}[Wasserstein Distance \citep{villani2009OT}]
The $p$-Wasserstein distance between $\mu$ and $\nu$ is defined as
\[
W_p(\mu,\nu)
:=
\mathrm{OT}(\mu,\nu,p)^{1/p},
\qquad p\in[1,\infty).
\]
\end{definition}

Building upon the \(p\)-norm optimal transport problem and the Wasserstein distance, \citet{wang2024rel} introduced a translation invariant extension by defining an equivalence relation \( \sim_T \) under translations.
They showed that the following distance is a well-defined metric on the quotient space \( \PR{p}{d}/\sim_T \).

\begin{definition}[Relative Translation Invariant Wasserstein Distance]
The \( p \)th order distance between \( \mu \) and \( \nu \) is defined as
\[
RW_p([\mu],[\nu])
:=
\min_{t \in \mathbb{R}^d}
\mathrm{OT}(\mu + t, \nu, p)^{1/p},
\]
where \( [\mu] \) and \( [\nu] \) denote the equivalence classes of \( \mu \) and \( \nu \) under the translation relation \( \sim_T \).
\end{definition}

In the special case \( p = 2 \), the Wasserstein distance admits the following decomposition:
\begin{equation}\label{eq:W2_decompose}
W_2^2(\mu,\nu)
=
\|\bar{\mu} - \bar{\nu}\|_2^2
+
RW_2^2([\mu],[\nu]),
\end{equation}
where \( \bar{\mu} \) and \( \bar{\nu} \) denote the means of \( \mu \) and \( \nu \), respectively.

This decomposition shows that the Wasserstein distance naturally can be divided into two components:
the first term captures the differences in means, while the second term $RW_2^2([\mu],[\nu])$ measures the internal differences, such as variance and higher-order geometry.

\subsection{The Gaussian Convex Cone in the $RW_2$ Space}
\citet{Takatsu2011Gaussian} showed that the metric completion of all zero-mean Gaussian distributions in the quadratic Wasserstein space $(\PR{2}{d}, W_2)$ naturally forms a \emph{convex cone},
\[
C(\mathcal{N})
=
\bigl\{\, \mathcal{N}(0,\Sigma) \;\big|\; \Sigma \in \Sym_*^d \,\bigr\},
\]
where $\Sym_*^d$ denotes the set of symmetric \textit{nonnegative} semidefinite $d\times d$ covariance matrices.

In this geometry, scaling the covariance matrix corresponds to a geodesic dilation in the $W_2$ space. 
Specifically, for any $\Sigma \in \Sym_*^d$ and $s > 0$,
\[
\mathcal{N}(0, s^2 \Sigma)
=
(s\,\Id)_\# \mathcal{N}(0, \Sigma),
\]
where $(s\,\mathrm{Id})_\#$ denotes the pushforward induced by the scaling map $x \mapsto s x$.

The apex of the convex cone is given by the Dirac measure $\delta_0$, which corresponds to a degenerate Gaussian distribution with zero covariance.
For a fixed covariance matrix $\Sigma$, the family
\[
\bigl\{\, \mathcal{N}(0, s^2 \Sigma) : s > 0 \,\bigr\}
\]
forms a one-dimensional geodesic ray emanating from the apex, where the matrix $\Sigma$ encodes the intrinsic \emph{shape} of the Gaussian distribution and the scale parameter $s$ determines its distance from the apex.
Distinct covariance matrices generate distinct rays, and all rays constitutes a solid convex cone in the Wasserstein space $(\PR{2}{d}, W_2)$.

\section{Methodology}\label{sec:main}
In this section, we develop the main theoretical results of the paper. We first introduce the cone geometry of the $RW_2$ metric space and define the \emph{relative Wasserstein angle} (the $RW_2$ angle). 
By showing that the filling cone between two rays has zero sectional curvature, we establish that orthogonal projections and inner products are also well-defined.
We then show that both the $RW_2$ angle and the projection distance can be used to quantify the degree of non-Gaussianity and find the optimal Gaussian approximations.
Finally, we present two algorithms for compute the $RW_2$ angle and the projection distance for both one-dimensional and high-dimensional settings.

\subsection{$RW_2$ Angles and Projections}
\label{sec:rw2angle}

\begin{figure}[t]
    \centering
    \includegraphics[width=0.45\textwidth]{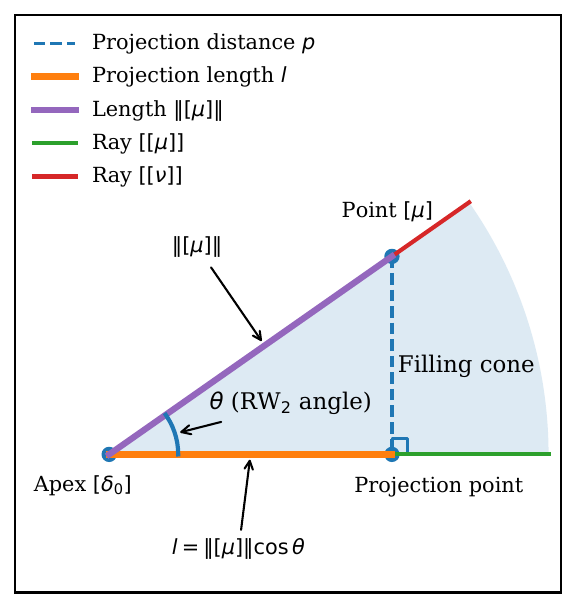}
    \caption{
    Schematic illustration of the cone geometry in the \( RW_2 \) space.
    The apex \( [\delta_0] \) represents the Dirac measure and the rays \( [[\mu]] \) and \( [[\nu]] \) emanate from this point.
    The shaded blue region between the two rays is the \emph{filling cone}.
    The angle \( \theta \) between the rays corresponds to the relative Wasserstein angle.
    The point \( [\mu] \) lies on the ray \( [[\nu]] \) at distance \( \|[\mu]\| \) from the apex, and its orthogonal projection onto the ray \( [[\mu]] \) has length
    \( l = \|[\mu]\| \cos \theta \).
    }
    \label{fig:cone_geo}
\end{figure}

\paragraph{Cone Geometry}
As noted above, the $RW_2$ distance defines a genuine metric and therefore induces a metric space $(\QT{2}{d}, RW_2)$. 
In contrast to the classical Wasserstein space $(\PR{2}{d}, W_2)$, a distinctive structural feature of $(\Qset, RW_2)$ is its cone geometry. 
Specifically, all singleton distributions in $(\PR{2}{d}, W_2)$ collapse into the equivalence class $[\delta_0]$, which serves as the \emph{apex} of the space $(\QT{2}{d}, RW_2)$. 
All remaining points emanate from this apex and form a family of \emph{rays} generated by different distributional shapes.

To formalize this structure, we introduce the \emph{dilation relation} $\sim_S$ on $(\QT{2}{d}, RW_2)$, defined by
\[
[\mu] \sim_S [\nu] 
\quad \Longleftrightarrow \quad 
[\nu] = (s\, \Id)_\# [\mu], 
\quad s > 0.
\]

Under the dilation relation, the metric space $(\Qset, RW_2)$ excluding the apex can be partitioned into disjoint rays, each of the form
\[
\text{ray }[[\mu]]
:= \bigl\{\, (s\,\mathrm{Id})_\# [\mu] : s > 0 \,\bigr\},
\]
which consist of all distributions that are rescalings of one another.
Here, the double-bracket notation $[[\cdot]]$ denotes rays in $(\Qset, RW_2)$, reflecting the fact that rays are induced by both translation and dilation.

Geometrically, each ray $[[\mu]]$ forms a \emph{geodesic line} with respect to the $RW_2$ metric, since the optimal transport between any two points on the same ray follows the shortest path. 
Consequently, the space $(\Qset, RW_2)$ admits a cone structure over the space with apex and rays generated by dilations.

\paragraph{Relative Wasserstein Angle} 
Motivated by the theory of angles in metric spaces \citep{ambrosio2005gradientflows}, we introduce an angle concept on this quotient space.

\begin{definition}[Relative Wasserstein Angle]
\label{def:rw_angles}
For any two rays \( [[\mu]] , [[\nu]] \in \Qset \), the \emph{relative Wasserstein angle} between them, denoted by $RW_2$ angle, is defined by
\begin{equation}\label{eq:rw_angle}
\angle([[\mu]],[[\nu]])
:=\arccos\frac{a^2 + b^2 - c^2}{2ab},
\end{equation}
where \( a = RW_2([\mu], [\delta_0]) \), \( b = RW_2([\nu], [\delta_0]) \), and \( c = RW_2([\mu], [\nu]) \).
\end{definition}

This definition follows the law of cosines and yields a well-defined notion of angle between rays in $(\Qset, RW_2)$. 
The resulting angle quantifies the \emph{directional} difference between two rays and is invariant under positive rescaling along each ray. 

The \( RW_2 \) angle can be understood as a rotational difference between two distributions. In particular, when the rays \( [[\mu]] \) and \( [[\nu]] \) are generated by the identical distribution and differ only by a small rotation, the \( RW_2 \) angle coincides with the underlying rotation angle. Moreover, when the rays \( [[\mu]] \) and \( [[\nu]] \) are generated by different distributions, even if their types are different, this angle is still well-defined.


\paragraph{Orthogonal Projection} The classical Wasserstein space is known to be an Alexandrov space with \textit{nonnegative} curvature, in which Euclidean geometric properties do not hold globally in general~\citep{ambrosio2005gradientflows, Takatsu2011Gaussian}. However, we show that the relative Wasserstein angle behaves like a Euclidean angle, as it can be embedded in a two-dimensional Euclidean plane.

Given any two rays in the \( RW_2 \) space, we define the \emph{filling cone} between them as the set of all points lying on geodesics connecting points on the two rays, as is shown in Figure~\ref{fig:cone_geo}. The geometry of this filling cone is characterized by the following result.

\begin{theorem}
If one of the rays is generated by an absolutely continuous distribution, then the associated filling cone has zero sectional curvature. Moreover, the filling cone is isometric to that of a cone region in the two-dimensional Euclidean plane.
\end{theorem}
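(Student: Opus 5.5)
The idea is to reduce everything to the Hilbert space $L^2(\mu;\RR^d)$ via Brenier's theorem. Without loss of generality we center both distributions, taking $\bar\mu=\bar\nu=0$ as representatives of $[\mu]$ and $[\nu]$. By \eqref{eq:W2_decompose}, $RW_2$ between centered representatives then equals $W_2$. Suppose $\mu$ is the absolutely continuous generator. Brenier's theorem gives a unique optimal map $T=\nabla\varphi$, with $\varphi$ convex, such that $T_\#\mu=\nu$. We also have $W_2^2(\mu,\nu)=\|\Id-T\|_{L^2(\mu)}^2$. Moreover $\int T\,d\mu=\bar\nu=0$, so the centering is preserved.

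Next we parametrize the rays. For $s,t\ge0$ set $\mu_s=(s\Id)_\#\mu$ and $\nu_t=(t\,T)_\#\mu$. Since $tT$ is the gradient of the convex function $t\varphi$, the map $\tfrac{t}{s}T$ pushes $\mu_s$ optimally onto $\nu_t$. Equivalently, the coupling $(s\,\Id,\,tT)_\#\mu$ is supported on the graph of a gradient of a convex function, hence optimal. This gives
\[
W_2^2(\mu_s,\nu_t)=\|s\,\Id-tT\|_{L^2(\mu)}^2 = s^2a^2+t^2b^2-2st\langle \Id,T\rangle_{L^2(\mu)},
\]
where $a=\|\Id\|_{L^2(\mu)}=RW_2([\mu],[\delta_0])$ and $b=\|T\|_{L^2(\mu)}$. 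We must still check that translations cannot lower this cost. The centered representatives already minimize over translations, since $\|\bar\mu_s-\bar\nu_t\|=0$ in \eqref{eq:W2_decompose}. The same reasoning covers the apex: $\mu_0$ and $\nu_0$ are both $[\delta_0]$.

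Now define the map $\Phi$ from the Euclidean sector spanned by $\Id$ and $T$ in $L^2(\mu)$ into the filling cone by $\Phi(s\Id+ \ldots)$; more precisely, for $s,t\ge0$ and $\lambda\in[0,1]$,
\[
\Phi\bigl((1-\lambda)s\Id+\lambda tT\bigr)=\bigl((1-\lambda)s\Id+\lambda tT\bigr)_\#\mu .
\]
Each map $(1-\lambda)s\Id+\lambda tT$ is the gradient of the convex function $(1-\lambda)\tfrac{s}{2}|x|^2+\lambda t\varphi$. Its image is therefore exactly the McCann displacement interpolation, i.e.\ the geodesic between $\mu_s$ and $\nu_t$. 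This shows that the filling cone is the image of $\Phi$.

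The key observation is that any two points in the sector are maps of the form $S_1=\alpha_1\Id+\beta_1T$ and $S_2=\alpha_2\Id+\beta_2T$ with $\alpha_i,\beta_i\ge0$. These are gradients of convex functions $\psi_1,\psi_2$. Since $\mu$ is absolutely continuous and $S_1$ is the gradient of a convex function, the coupling $(S_1,S_2)_\#\mu$ is cyclically monotone and hence optimal. To see this, note that $S_2\circ S_1^{-1}$ is the gradient of a convex function whenever $S_1$ is injective. Alternatively, and more robustly, we can invoke the standard fact that any two maps $\nabla\psi_1,\nabla\psi_2$ in the convex cone generated by $\{\Id,T\}$ give an optimal coupling of their pushforwards.

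This step is the main obstacle. The coupling $(\nabla\psi_1,\nabla\psi_2)_\#\mu$ is not automatically optimal for two arbitrary convex gradients. The plan is to handle it as follows:
\begin{itemize}
\item \textbf{Case $\alpha_1>0$.} Here $\psi_1$ is strictly convex, so $\nabla\psi_1$ is injective. Moreover $\nabla\psi_2=c\,\nabla\psi_1+(\text{gradient of a convex function of }\nabla\psi_1)$ exactly when the coefficients satisfy a monotonicity ordering. We then reduce to the one-parameter computation by writing $S_2=\kappa S_1+ \rho\,(\text{either }\Id\text{ or }T)$ with $\kappa,\rho\ge0$. Optimality follows since the sum of gradients of convex functions composed appropriately remains cyclically monotone.
\item \textbf{Case $\alpha_1=0$.} Here we fall back on the triangle-in-plane argument: distances along geodesics from the two rays are determined by comparison with the flat triangle.
\end{itemize}
If the full cyclic monotonicity fails in some case, we retreat to proving flatness of every geodesic triangle with one vertex at the apex and the other two on the rays. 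That weaker statement suffices for the zero sectional curvature claim in the Alexandrov sense.

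Granting this, $RW_2(\Phi(S_1),\Phi(S_2))=\|S_1-S_2\|_{L^2(\mu)}$, so $\Phi$ is an isometry from a planar sector of the Hilbert space to the filling cone. This sector is the set of nonnegative combinations of $\Id$ and $T$, with opening angle $\arccos\bigl(\langle\Id,T\rangle/(ab)\bigr)$. By the formula for $W_2^2(\mu_s,\nu_t)$ above, this opening angle agrees with Definition~\ref{def:rw_angles}. A two-dimensional subspace of a Hilbert space is a Euclidean plane, so the filling cone is isometric to a planar cone region. Its sectional curvature is zero, because every geodesic triangle has Euclidean comparison triangle equal to itself.
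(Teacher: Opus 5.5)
You have the same architecture as the paper: center, take the Brenier map $T=\nabla\varphi$, parametrize the filling cone by $S_{\alpha,\beta}=\alpha\Id+\beta T$ with $\alpha,\beta\ge0$, and identify $RW_2$ with $\|S_1-S_2\|_{L^2(\mu)}$. The gap is exactly the step you flag and then ``grant'': optimality of $(S_1,S_2)_\#\mu$ for two arbitrary points of the sector. Without it you only control distances between points of the two rays. Those do not determine the metric on the interior of the cone, so neither the isometry nor the curvature claim follows.

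None of your justifications closes the gap:
\begin{itemize}
\item Injectivity of $S_1$ does not make $S_2\circ S_1^{-1}$ a convex gradient. Already for linear maps, $BA^{-1}$ with $A,B$ symmetric positive definite is generally not symmetric.
\item In your specific setting the claim is true, but your $\rho T$ sub-case needs a non-obvious fact that ``composed appropriately'' does not supply. For $\beta_1>0$ one needs that $T\circ S_1^{-1}=\beta_1^{-1}\nabla\bigl(\tfrac12|y|^2-\alpha_1\psi_1^*\bigr)$ is a convex gradient. This holds because $\psi_1=\tfrac{\alpha_1}{2}|x|^2+\beta_1\varphi$ is $\alpha_1$-strongly convex, so $\psi_1^*$ is $\alpha_1^{-1}$-smooth (a Moreau--Yosida argument).
\item Handling $\alpha_1=0$ by comparison with a flat triangle is circular. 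In fact it reduces to the first case by swapping $S_1$ and $S_2$, or it is a pure dilation along the $\nu$-ray.
\item The fallback to triangles with a vertex at the apex involves only distances you already have. It cannot yield the isometry, and zero curvature concerns triangles with vertices anywhere in the cone.
\end{itemize}

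The paper supplies the missing ingredient by checking cyclic monotonicity directly, with no inversion and no cases. For $x_1,\dots,x_m$ in the domain of $T$ (indices mod $m$), put $y_i=T(x_i)$, $u_i=a_1x_i+b_1y_i$, $v_i=a_2x_i+b_2y_i$. By bilinearity,
\[
\sum_i\langle u_{i+1},v_{i+1}-v_i\rangle=a_1a_2\Sigma_{xx}+a_1b_2\Sigma_{xy}+b_1a_2\Sigma_{yx}+b_1b_2\Sigma_{yy}.
\]
Each term is nonnegative:
\begin{itemize}
\item $\Sigma_{zz}=\sum_i\langle z_{i+1},z_{i+1}-z_i\rangle=\tfrac12\sum_i\|z_{i+1}-z_i\|^2\ge0$ for $z=x,y$.
\item $\Sigma_{xy}=\sum_i\langle x_{i+1},y_{i+1}-y_i\rangle\ge0$ by cyclic monotonicity of the graph of $T$.
\item $\Sigma_{yx}=\sum_i\langle y_{i+1},x_{i+1}-x_i\rangle\ge0$ because the transposed graph is cyclically monotone too.
\end{itemize}
With nonnegative coefficients the sum is nonnegative, so $(S_1,S_2)_\#\mu$ has cyclically monotone support and is optimal. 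This gives $W_2^2=\|(a_1-a_2)\Id+(b_1-b_2)T\|_{L^2(\mu)}^2$ for all pairs, and the rest of your argument then goes through.

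A minor slip: the optimal map from $\mu_s$ to $\nu_t$ is $y\mapsto tT(y/s)$, not $\tfrac{t}{s}T$. Your ``equivalently'' sentence states it correctly.
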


The proof is provided in Appendix~\ref{subsec:prf_flat_add}. 
This isometric structure of the filling cone allows Euclidean notions, such as angles and orthogonal projections, to be well-defined in an analogous manner.


\begin{definition}[Projection Distance]
\label{def:rw_angles}
Given a point \( [\mu] \) and a ray \( [[\nu]] \), the \emph{projection distance} between point \( [\mu] \) and ray \( [[\nu]] \) is defined by
\begin{equation}\label{eq:rw_proj}
p([\mu],[[\nu]]) := \|[\mu]\|_{RW_2} \sin \angle([[\mu]], [[\nu]]).
\end{equation}
\end{definition}
Here $\|[\mu]\|_{RW_2} = RW_2([\mu],[\delta_0])$, serving as the length of the point \( [\mu] \) measured from the apex.

We may further define an inner product by
$
\langle [\mu],[\nu]\rangle
=
\|[\mu]\|_{RW_2}\,\|[\nu]\|_{RW_2}\,\cos\angle([[\mu]],[[\nu]]).
$
This construction can be interpreted as integrating the inner products associated with the corresponding optimal couplings.
Nevertheless, the resulting space does not form a vector space, as its underlying geometry is conic rather than linear.

Finally, we emphasize that this structure can not be trivially generalized to cases involving more than two rays. In particular, example~\ref{sec:three_rays_add} in the Appendix shows the filling cone generated by three rays is not isometric to a cone in three dimensional Euclidean space.


\subsection{$W_2$-Minimal Gaussian Distribution}

We now focus on the central problem: given an empirical distribution $\mu$, to what extent can it be regarded as Gaussian?

From the perspective of optimal transport, this problem can be interpreted geometrically as finding the \emph{minimal} \( W_2 \) distance between the given distribution \( \mu \) and the family of Gaussian distributions. Because of the decomposition in Equation~\ref{eq:W2_decompose}, the mean of any Gaussian minimizer must coincide with the mean of \( \mu \). Consequently, the problem can be reduced to optimizing over the covariance matrices of Gaussian distributions.

\paragraph{Projection onto the Gaussian Cone.} As mentioned above, the set of all Gaussian distributions forms a convex cone $\GCone$ in the $RW_2$ metric space. 
This Gaussian cone consists of a single apex together with a family of disjoint Gaussian rays differing by their covariances.
As the minimal $W_2$ distance must be a projection distance from the point $[\mu]$ to a certain ray within the Gaussian cone $\GCone$, the problem of finding the minimal $W_2$ distance from distribution $\mu$ to a Gaussian distribution is equivalent to finding the projection distance from the point $[\mu]$ onto the Gaussian cone $\GCone$ in the metric space $(\Qset, RW_2)$.
Moreover, since projection distances in this space are characterized by angles, this problem can equivalently be viewed as finding the minimal relative Wasserstein angle between the ray $[\mu]$ and a Gaussian ray.

In particular, if $\mu$ itself is Gaussian, the ray $[[\mu]]$ coincides with a Gaussian ray, and both the projection distance and the $RW_2$ angle are zero. 
In contrast, when $\mu$ is non-Gaussian, the projection distance and the corresponding $RW_2$ angle are strictly positive. 
As a result, both the projection and the angle can quantify the degree of non-Gaussianity of the distribution $\mu$.

\paragraph{Solutions.}
In the one-dimensional setting, the problem admits a closed-form solution for distribution \( \mu \). In this case, both the \( RW_2 \) angle and the projection distance can fully quantify the non-Gaussianity; detailed results are presented in Subsection~\ref{subsec:one_dim}.

In contrast, in high-dimensional settings, the optimal transport problem generally does not admit an analytical solution, except for distributions with special structures (e.g., coordinate-separable distributions in a PCA basis; see Section~\ref{sec:sol_Coordinate_add}). We therefore resort to numerical methods to solve the problem, which are discussed in detail in Subsection~\ref{subsec:high_dim}.

\paragraph{The Moment-Matching Gaussian.}
A commonly used Gaussian approximation of \( \mu \) in statistics is the moment-matching Gaussian distribution \( \mathcal{N}(\bar{\mu}, \Sigma_\mu) \), where \( \bar{\mu} \) and \( \Sigma_\mu \) denote the mean and covariance of \( \mu \), respectively. It is natural to conjecture that this distribution is also the $W_2$-nearest Gaussian to \( \mu \). However, we show that when \( \mu \) is non-Gaussian, the moment-matching Gaussian can \textit{not} be the optimal solution under the \( RW_2 \) metric.

To see this, observe that when \( \mu \) is non-Gaussian, the moment-matching Gaussian shares the same covariance matrix as \( \mu \), and hence has the same length \( \|[\mu]\|_{RW_2} \). Nevertheless, the projection point of \( [\mu] \) onto the Gaussian ray is different from the moment-matching Gaussian, as the length of the projection point is smaller than \( \|[\mu]\|_{RW_2}\). This implies that the moment-matching Gaussian can not be the nearest Gaussian approximation of \( \mu \).

Moreover, we demonstrate that for certain distribution \( \mu \), the nearest Gaussian distribution may neither share the same eigenvalues nor eigenvectors of the covariance matrix (one illustrative example is provided in Appendix~\ref{sec:not_sharing_add}). This example suggests that analytic solutions are unlikely to exist in general for high-dimensional cases. Therefore, we adopt computational approaches based on dual formulation and manifold optimization, which are described in Subsection~\ref{subsec:high_dim}.

\subsection{Closed-Form Solution in One Dimension}
\label{subsec:one_dim}

Let \( \mu \) be a discrete probability distribution on \( \mathbb{R} \) with zero mean,
\[
\mu := \frac{1}{n}\sum_{i=1}^n \delta_{x_i},
\qquad \bar{\mu} = 0,
\]
where $x_1 \le \dots \le x_n$ are real-valued samples, assumed to be sorted in ascending order.

We consider the family of Gaussian distributions $\mathcal N(0,\sigma^2)$ with $\sigma \ge 0$.  
Let $\phi$ and $\Phi$ denote the probability density function (PDF) and cumulative distribution function (CDF) of the standard normal distribution $\mathcal N(0,1)$, respectively, and define
\[
\sigma_\mu := \sqrt{\frac{1}{n}\sum_{i=1}^n x_i^2}, 
\qquad
z_i := \Phi^{-1}\!\left(\frac{i}{n}\right),
\]
where $z_0 := -\infty,\quad z_n := +\infty$.

\begin{proposition}[Closed-Form Projection Distance and \( RW_2 \) Angle in One Dimension]
\label{prop:1d_closed_form}
Let \( \mu \) be a one-dimensional empirical distribution with zero mean as above.
Then the relative Wasserstein angle \( \theta \) and the projection distance \( p \) of the point \( [\mu] \) onto the Gaussian ray \( [[\mathcal N(0,\sigma^2)]] \) are given by
\begin{equation}
\label{eq:one_dim_sol}
\theta
=
\arccos\!\left(\frac{l}{\sigma_\mu}\right),
\qquad
p
=
\sigma_\mu \sin \theta,
\end{equation}
\end{proposition}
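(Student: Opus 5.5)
The plan is to reduce everything to the one-dimensional quantile representation of $W_2$ and then read off the angle from the law of cosines in Definition~\ref{def:rw_angles}. I take the quantity $l$ in \eqref{eq:one_dim_sol} to be
\[
l=\sum_{i=1}^n x_i\bigl(\phi(z_{i-1})-\phi(z_i)\bigr),\qquad \phi(z_0)=\phi(z_n)=0 ,
\]
that is, the $L^2(0,1)$ inner product $\int_0^1 F_\mu^{-1}(u)\,\Phi^{-1}(u)\,du$ between the quantile functions of $\mu$ and $\mathcal N(0,1)$. This identification should be checked against the paper's definition of $l$.

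\textbf{Step 1: reduce to $W_2$ at the common mean.} Both $\mu$ and $\mathcal N(0,\sigma^2)$ have mean zero. By the decomposition \eqref{eq:W2_decompose}, $RW_2([\mu],[\nu])=W_2(\mu,\nu)$ for any two mean-zero measures, so the optimal translation is $t=0$. In one dimension the optimal coupling is the monotone one, which gives
\[
W_2^2(\mu,\nu)=\int_0^1\bigl(F_\mu^{-1}(u)-F_\nu^{-1}(u)\bigr)^2du .
\]

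\textbf{Step 2: the three side lengths.} On $((i-1)/n,\,i/n]$ the quantile function of $\mu$ is the constant $x_i$, and $F_{\mathcal N(0,\sigma^2)}^{-1}=\sigma\Phi^{-1}$. Using $\int_a^b\Phi^{-1}(u)\,du=\phi(\Phi^{-1}(a))-\phi(\Phi^{-1}(b))$, which follows from the substitution $u=\Phi(z)$ and $\phi'(z)=-z\phi(z)$, the three lengths are
\[
a^2=\|[\mu]\|^2=\sigma_\mu^2,\qquad b^2=\sigma^2,\qquad c^2=\sigma_\mu^2-2\sigma l+\sigma^2 .
\]

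\textbf{Step 3: angle and projection.} Substituting into \eqref{eq:rw_angle} gives
\[
\cos\theta=\frac{2\sigma l}{2\sigma\sigma_\mu}=\frac{l}{\sigma_\mu}.
\]
This is independent of $\sigma$, which confirms that the angle is a function of the ray $[[\mathcal N(0,\sigma^2)]]$ alone. Up to dilation there is only one nondegenerate Gaussian ray in one dimension, so projecting onto the Gaussian cone is the same as projecting onto this ray. Then $p=\sigma_\mu\sin\theta$ follows from \eqref{eq:rw_proj}.

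As a consistency check, minimizing $c^2$ over $\sigma$ gives $\sigma^\ast=l$ and $\min c^2=\sigma_\mu^2-l^2=p^2$. So $p$ really is the $W_2$ distance to the nearest Gaussian, $\mathcal N(0,l^2)$. This agrees with the flat filling-cone theorem, since $\mathcal N(0,1)$ is absolutely continuous.

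\textbf{Main obstacle.} The step needing care is showing that $\arccos(l/\sigma_\mu)$ is well defined, i.e.\ $0\le l\le\sigma_\mu$.
\begin{itemize}
\item The upper bound $l\le \sigma_\mu\|\Phi^{-1}\|_{L^2}=\sigma_\mu$ follows from Cauchy--Schwarz.
\item For $l\ge 0$, both quantile functions are nondecreasing. Chebyshev's integral inequality then gives $l\ge \int_0^1 F_\mu^{-1}\cdot\int_0^1\Phi^{-1}=0$.
\item Strictness: $l<\sigma_\mu$ unless $F_\mu^{-1}$ is proportional to $\Phi^{-1}$, which is impossible for a nondegenerate empirical measure. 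Hence $\theta>0$ and $p>0$, consistent with the moment-matching discussion.
\end{itemize}
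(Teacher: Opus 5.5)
Your proof is correct and follows essentially the same route as the paper. You use the quantile representation to get $W_2^2(\mu,\mathcal N(0,\sigma^2))=\sigma_\mu^2-2\sigma l+\sigma^2$ and read the angle off the law-of-cosines definition; the paper states that last step as the Euclidean relation $l=\|[\mu]\|_{RW_2}\cos\theta$. Your extra checks that $0\le l<\sigma_\mu$ (Cauchy--Schwarz and Chebyshev's integral inequality) and that $\sigma^\ast=l$ attains $p^2$ go beyond the paper, and they usefully confirm that $\arccos(l/\sigma_\mu)$ is well defined and that $p$ really is the nearest-Gaussian distance.
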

where
$
l
=\sum_{i=1}^n x_i \bigl(\phi(z_{i-1}) - \phi(z_i)\bigr).
$

Both the relative Wasserstein angle \( \theta \) and the projection distance \( p \) depend solely on the empirical distribution \( \mu \) and are independent of the Gaussian scale parameter \( \sigma \).
A detailed derivation of Proposition~\ref{prop:1d_closed_form} is provided in Appendix~\ref{sec:prf_one_dim_add}. Algorithm~\ref{alg:rw2_1d} summarizes the procedure for computing the projection distance and the \( RW_2 \) angle in one dimension.

\begin{algorithm}[H]
\caption{Projection Distance and $RW_2$ Angle (1D)}
\label{alg:rw2_1d}
\begin{algorithmic}[1]
\REQUIRE Samples $\{x_i\}_{i=1}^n \subset \mathbb{R}$
\ENSURE Projection distance $p$ and $RW_2$ angle $\theta$

\STATE Center data: $x_i \leftarrow x_i - \frac{1}{n}\sum_j x_j$
\STATE Sort $x_1 \le \dots \le x_n$
\STATE $\sigma_\mu \leftarrow \sqrt{\frac{1}{n}\sum_i x_i^2}$
\STATE $z_i \leftarrow \Phi^{-1}\!\left(\frac{i}{n}\right)$, $z_0=-\infty$, $z_n=+\infty$
\STATE 
      $l \leftarrow \sum_{i=1}^n x_i\bigl(\phi(z_{i-1})-\phi(z_i)\bigr) $
\STATE 
      $\theta \leftarrow \arccos\!\left(\frac{l}{\sigma_\mu}\right)$
\STATE 
      $p \leftarrow \sigma_\mu \sin \theta$ 
\OUTPUT $p,\theta$
\end{algorithmic}
\end{algorithm}

\paragraph{Complexity.}
The algorithm runs in $\mathcal{O}(n \log n)$ time due to sorting and requires $\mathcal{O}(n)$ memory.  

\paragraph{Discussion.}
Equation~\eqref{eq:one_dim_sol} also appeared in the literature \citep{Brown96Test, Barrio99test}, although these works do not derive the formula from the geometric perspective of angles and orthogonal projections in the metric space.

The same approach can be extended to other classes of distributions parametrized by mean and scale, such as uniform, Laplace, and logistic distributions; the exact formulae are provided in Appendix~\ref{sec:sol_other_dis_add}. For distribution families that are not parametrized solely by mean and scale, the problem corresponds to projecting onto more general curves in the \( RW_2 \) space, which we leave for future investigation.

\subsection{Solution in High Dimensions}
\label{subsec:high_dim}

In low dimensions, the \( RW_2 \) distance can be computed using power diagram methods or discrete optimal transport solvers \citep{peyre2019computational}. However, these approaches become impractical in high dimensions due to the complexity of Laguerre cells and the curse of dimensionality. To overcome this difficulty, we combine a semi-discrete dual formulation of the \( RW_2 \) distance with stochastic Riemannian optimization over the Gaussian covariances.

We decompose the high-dimensional procedure into two components: 
1) evaluation of \( RW_2(\mu, \mathcal N(0,\Sigma)) \) via a dual formulation;
2) optimization over Gaussian covariances using manifold optimization to identify the nearest Gaussian.
Due to space limitations, the first component is described to Appendix~\ref{sec:rw2_eval_appendix}, while we focus on the second component.


\subsubsection{Manifold Optimization for Gaussian Covariances}

Using the eigendecomposition, we parameterize the Gaussian covariance as
\[
\Sigma = R\Lambda R^\top,
R^\top R = I,
\Lambda=\mathrm{diag}(\lambda),\ \lambda \ge 0,\ \mathbf{1}^\top \lambda \le 1.
\]

The trace constraint \( \mathbf{1}^\top \lambda \le 1 \) follows from the fact that both the empirical distribution and candidate Gaussian distributions can be restricted in the unit ball of the \( RW_2 \) space.

Under this parameterization, the covariance optimization problem can be optimized by using stochastic Riemannian optimization, as summarized in Algorithm~\ref{alg:rw2_opt}.

\begin{algorithm}[H]
\caption{Nearest Gaussian in High-Dim Case}
\label{alg:rw2_opt}
\begin{algorithmic}[1]
\REQUIRE Samples $\{x_i\}_{i=1}^n$, dual potential $f$, outer iterations $K_2$; batch size $m_b$; stepsizes $\eta_R,\eta_\Lambda$.
\ENSURE Optimal covariance $\Sigma^\star$.

\STATE Initialize $R^{(0)}$, $\lambda^{(0)}$, and $\Lambda^{(0)}=\mathrm{diag}(\lambda^{(0)})$
\FOR{$t=0,\dots,K_2-1$}
    \STATE Sample $\xi_\ell\sim\mathcal N(0,I_d)$ and set
    $Y_\ell=R^{(t)}(\Lambda^{(t)})^{1/2}\xi_\ell$
    \FOR{$\ell=1,\dots,m_b$}
        \STATE $i_\ell\in\arg\min_i\bigl(\tfrac12\|x_i-Y_\ell\|^2-f_i\bigr)$
        \STATE $r_\ell \leftarrow Y_\ell - x_{i_\ell}$
    \ENDFOR
    \STATE Update $R$ via Riemannian descent using $\widehat{\nabla}_R L$
    \STATE Update $\lambda$ via projected gradient using $\widehat{\nabla}_\lambda L$
    \STATE $\Sigma^{(t+1)} \leftarrow R\Lambda R^\top$
\ENDFOR 
\OUTPUT $\Sigma^\star = \Sigma^{(T)}$
\end{algorithmic}
\end{algorithm}

\paragraph{Stochastic Gradient Formulas.}
Let $\{x_i\}_{i=1}^{n}$ be the samples from the empirical distribution $\mu$ and $\{\xi_\ell\}_{i =1}^{m_b}$ be the samples from $\mathcal N(0,I_d)$, where $n$ is the number of samples and $m_b$ is the batchsize in the dual form. Let $Y_\ell = R\Lambda^{1/2}\xi_\ell$ and $r_\ell = Y_\ell - x_{i_\ell}$. The stochastic gradients are given by
\[ \
\begin{aligned}
\widehat{\nabla}_R L
&=
\frac{1}{m_b}\sum_{\ell=1}^{m_b}
r_\ell\bigl(\Lambda^{1/2}\xi_\ell\bigr)^\top,\\[4pt]
\widehat{\nabla}_\lambda L
&=
\frac{1}{2m_b}\sum_{\ell=1}^{m_b}
\bigl(R^\top r_\ell\bigr)\odot
\frac{\xi_\ell}{\sqrt{\lambda}}.
\end{aligned}
\] 


\subsubsection{Alternating Optimization Scheme}

Combining the distance evaluation procedure in Appendix~\ref{sec:rw2_eval_appendix} with the manifold optimization above, we obtain the following alternating scheme for computing the \( RW_2 \)-nearest Gaussian distribution.

\begin{algorithm}[H]
\caption{Alternating Scheme in High-Dim Case}
\label{alg:rw2_alternating}
\begin{algorithmic}[1]
\REQUIRE Empirical samples $\{x_i\}_{i=1}^n\subset\mathbb{R}^d$ (centered with unit length);
outer iterations $T$; batch size $m$; stepsizes $\eta_f,\eta_R,\eta_\Lambda$.
\ENSURE Approximate nearest Gaussian covariance $\Sigma^\star$, projection distance $p^\star$, and angle $\theta^\star$.

\STATE Initialize covariance $\Sigma^{(0)}$ with $\mathrm{tr}(\Sigma^{(0)})\le 1$
\FOR{$t=0,1,\dots,T-1$}

    \STATE \textbf{(A) \(RW_2\) distance evaluation:}
    \STATE Using Appendix Algorithm~\ref{alg:rw2_eval}, compute the dual potential $f^{(t)}$ and
    $RW_2\bigl(\mu,\mathcal N(0,\Sigma^{(t)})\bigr)$
    \STATE \textbf{(B) Covariance update:}
    \STATE Update $\Sigma^{(t+1)}$ via one Riemannian descent step using $f^{(t)}$
    (Algorithm~\ref{alg:rw2_opt})
\ENDFOR
\STATE Set $\Sigma^\star=\Sigma^{(T)}$
\STATE Compute the projection distance $p^\star$ and the relative Wasserstein angle $\theta^\star$
\OUTPUT $\Sigma^\star,\; p^\star,\; \theta^\star$
\end{algorithmic}
\end{algorithm}

\section{Experiments}
\label{sec:exp}

In this section, we present two experiments to validate the proposed theory.
The first experiment investigates the non-Gaussianity caused by finite-sample effects in a low-dimensional setting, illustrating how the \( RW_2 \) angle and the projection distance provide quantitative measures of deviation from Gaussianity for finite samples.
The second experiment focuses on the non-Gaussianity of feature distributions of commonly used machine learning datasets in the evaluation of FID scores, and shows our approach can provide a better estimator.

\subsection{Finite-Sample Effects in the Low-Dimensional Case}

\paragraph{Setup.}
We construct empirical distributions by sampling
\(
N \in \{2^{2}, 2^{4}, 2^{6}, 2^{8}, 2^{10}\}
\)
points from a Gaussian mixture model with two equally weighted components,
\(
\mathcal{N}(-m,1)
\)
and
\(
\mathcal{N}(m,1).
\)
The parameter \( m \) controls the separation between the two component means and is chosen from
\( \{1,2,3\}.\)
The target distribution is the standard Gaussian
\( \mathcal{N}(0,1).\)
For each sample size \(N\), we compute the corresponding \(RW_2\) angle \( \theta \) and projection distance \( p \) between the finite samples and the Gaussian ray.
All quantities are averaged over 10 independent repetitions, and the associated variances are reported for sampling variability.

\begin{figure}[t]
    \centering
    \includegraphics[width=0.45\textwidth]{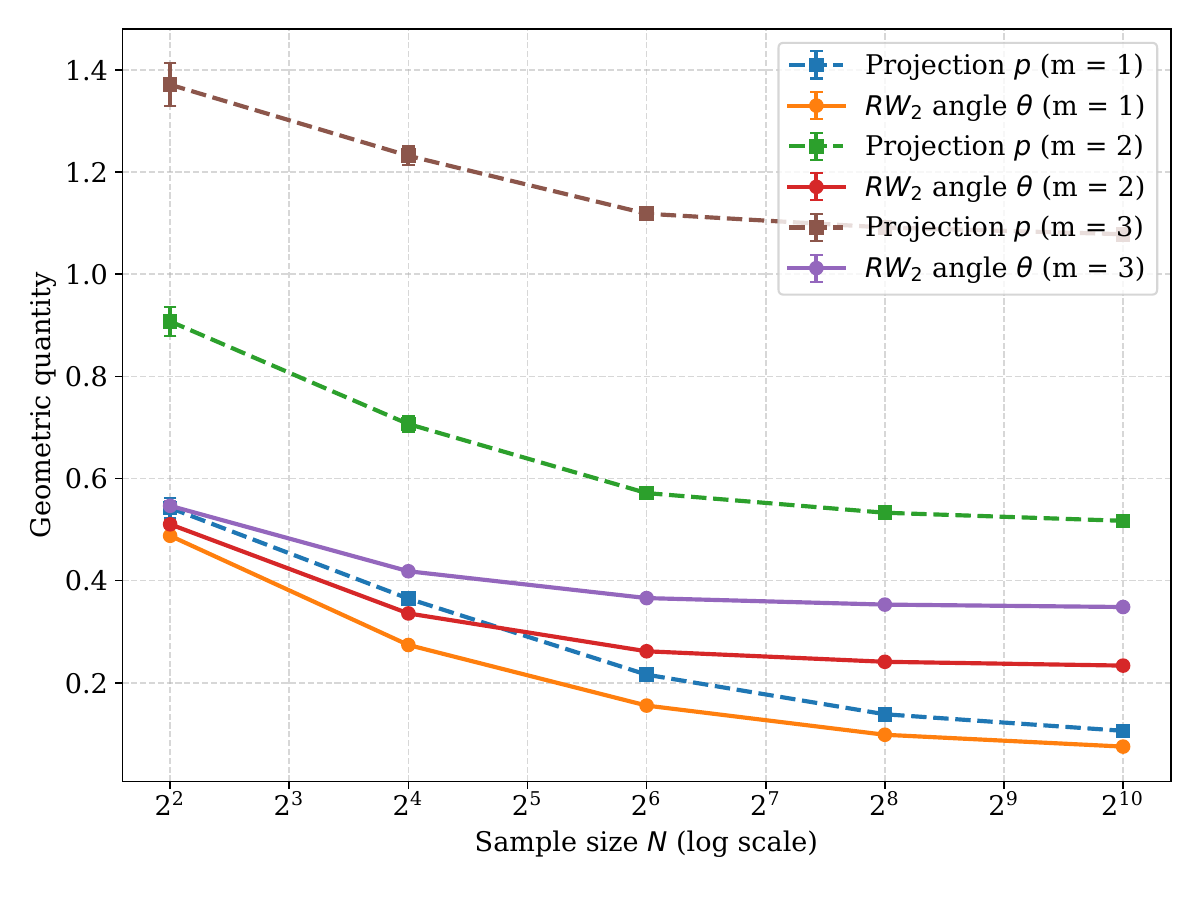}
    \hfill
    \includegraphics[width=0.46\textwidth]{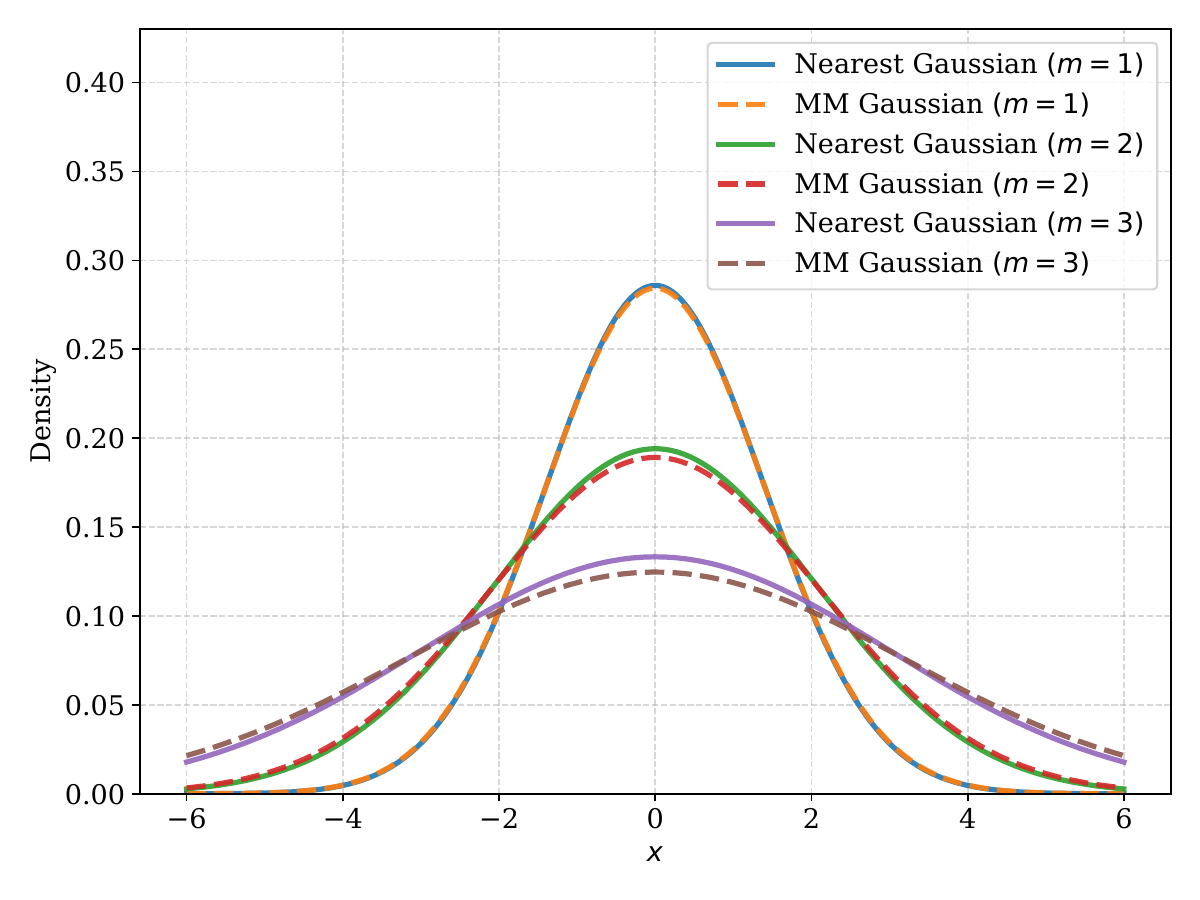}
    \caption{
    \textbf{Top:} Evolution of the \(RW_2\) angle \(\theta\) and the projection distance \(p\) as functions of the sample size \(N\).
    Shaded regions indicate one standard deviation over repeated experiments.
    As \(N\) increases, both quantities decrease, reflecting the decay of finite-sample non-Gaussianity.
    \textbf{Bottom:} Gaussian approximation at a fixed sample size \(N = 64\).
    For each \(m\), the solid curve represents the \(RW_2\)-nearest Gaussian, and the dashed curve represents the moment-matching Gaussian.
    Although the two Gaussian approximations are close, they do not coincide, and the difference becomes larger as \(m\) increases.
    }
    \label{fig:rw2_1d_gmm}
\end{figure}

\paragraph{Results.}
Figure~\ref{fig:rw2_1d_gmm} (Top) shows that, as the sample size \(N\) increases, both the \(RW_2\) angle \(\theta\) and the projection distance \(p\) decrease monotonically.
For fixed $m$, both quantities converge to finite limiting values.
These limits correspond to the exact geometric relationship between the underlying Gaussian mixture distribution and the standard Gaussian distribution as \(N \to \infty\).
Moreover, larger values of \(m\) lead to consistently larger values of \(\theta\) and \(p\), indicating increasing deviation from Gaussianity as the separation between mixture components grows.

Notably, the projection distance exhibits substantially larger variance across trials than the \(RW_2\) angle, particularly for small sample sizes.
This behavior arises because the projection distance measures differences between \emph{points}, whereas the \(RW_2\) angle compares \emph{rays} and is therefore scale-invariant and more robust to finite-sample fluctuations.

The bottom panel of Figure~\ref{fig:rw2_1d_gmm} provides a concrete illustration for \(N=64\), confirming that the moment-matching Gaussian is close but not identical to the \(RW_2\)-nearest Gaussian.

\subsection{Evaluation on Non-Gaussianity of Feature Distributions}

\paragraph{Setup.}
We evaluate the non-Gaussianity of feature distributions arising from commonly used benchmarks in generative modeling, including CIFAR-10~\citep{krizhevsky2009learning},  MNIST~\citep{lecun1998gradient}, CelebA-64~\citep{liu2015faceattributes}, and LSUN~\citep{yu2015lsun}.
All experiments are conducted using Algorithm~\ref{alg:rw2_alternating}, with features extracted from the Inception-V3 network~\citep{szegedy2016rethinking}, following standard practice in FID-based evaluations.

For each dataset, we treat the empirical distribution of features extracted from the entire dataset as the source distribution and compute its \(RW_2\)-nearest Gaussian approximation.
We compare the relative Wasserstein angle \(\theta^\star\) and projection distance \(p^\star\) of the nearest Gaussian with the corresponding quantities \(\theta_{\mathrm{MM}}\) and \(p_{\mathrm{MM}}\) obtained from the moment-matching Gaussian.
To mitigate the effects of stochasticity introduced by Monte Carlo sampling and optimization, each experiment is repeated independently 10 times under identical configurations.
We report both the mean and variance of both quantities across trials.

\begin{table}[t]
\centering
\caption{Non-Gaussianity of feature distributions measured by the relative Wasserstein angle and projection distance.
We report the mean and variance (over 10 trials) of the relative Wasserstein angle $\theta$
and projection distance $p$ for the $W_2$-nearest Gaussian ($RW_2^\star$) and the
moment-matching Gaussian (MM).
Variance scales are indicated in the column headers; all values are rounded to two decimal digits.}
\label{tab:rw2_real_datasets}
\begin{tabular}{lcc}
\toprule
Dataset / Method
& $\theta$ (rad)
& $p$ \\
& (var $\times 10^{-4}$)
& (var $\times 10^{-5}$) \\
\midrule
CIFAR-10 (RW$_2^\star$)
& $0.92 \pm 1.02$
& $0.79 \pm 3.70$ \\
CIFAR-10 (MM)
& $1.12 \pm 0.01$
& $0.90 \pm 0.01$ \\
\midrule
MNIST (RW$_2^\star$)
& $0.76 \pm 0.50$
& $0.69 \pm 2.62$ \\
MNIST (MM)
& $0.94 \pm 0.01$
& $0.81 \pm 0.04$ \\
\midrule
CelebA-64 (RW$_2^\star$)
& $0.75 \pm 1.12$
& $0.68 \pm 5.98$ \\
CelebA-64 (MM)
& $0.97 \pm 0.01$
& $0.83 \pm 0.03$ \\
\midrule
LSUN-Church (RW$_2^\star$)
& $0.82 \pm 0.75$
& $0.73 \pm 3.53$ \\
LSUN-Church (MM)
& $1.02 \pm 0.01$
& $0.85 \pm 0.01$ \\
\bottomrule
\end{tabular}
\end{table}

\paragraph{Results.}
Table~\ref{tab:rw2_real_datasets} summarizes the relative Wasserstein angle and projection distance for commonly used datasets' feature distributions.
Across all datasets, the \(W_2\)-nearest Gaussian consistently achieves a smaller relative Wasserstein angle than the moment-matching Gaussian.
For example, for CIFAR-10, the angle decreases from \(\theta_{\mathrm{MM}} \approx 1.12\) to \(\theta^\star \approx 0.91\), corresponding to a reduction of nearly \(12^\circ\).
In addition, the \(W_2\)-nearest Gaussian also yields a smaller projection distance \(p^\star\).
These improvements are stable across repeated trials, as reflected by the consistently small variances reported in Table~\ref{tab:rw2_real_datasets}.
Together, these results demonstrate that moment matching alone can substantially overestimate the degree of Gaussianity in high-dimensional feature distributions, whereas the proposed \(RW_2\)-nearest provide a more faithful approximation.

\section{Conclusion}

We proposed a geometric framework for quantifying non-Gaussianity in optimal transport by introducing the relative Wasserstein angle and orthogonal projection distance in the $RW_2$ space.
Exploiting the cone geometry induced by translations and dilations, we showed that the filling cone generated by two rays is flat, which ensures that angles and projections are well-defined.

This viewpoint reframes Gaussian approximation as a projection problem onto the Gaussian cone in Wasserstein space.
As a consequence, the commonly used moment-matching Gaussian is generally \emph{not} the $W_2$-nearest Gaussian for non-Gaussian distributions, a discrepancy precisely captured by the projection distance and the $RW_2$ angle.

We derived closed-form solutions in one dimension and developed an efficient stochastic Riemannian optimization method for high-dimensional settings.
Experiments on synthetic data and real-world feature distributions demonstrate that the proposed $RW_2$-nearest Gaussian provides a closer and more stable approximation than moment matching, particularly in high dimensions.

More broadly, this work shows that angles and projections in Wasserstein geometry offer robust, scale-invariant tools for distribution comparison, with potential applications beyond Gaussian models, including other parametric families and generative model evaluation.

\section*{Impact Statement}
This paper presents work whose goal is to advance the field of Machine
Learning. There are many potential societal consequences of our work;
findings from this paper provide a novel in-depth understanding of the properties and limitations of the FID score.


\bibliography{myBib}
\bibliographystyle{icml2026}

\newpage 
\appendix 

\section*{Appendix}

\section{Proof of Flatness of the Filling Cone}
\label{subsec:prf_flat_add}

In this section, we prove that the two-dimensional filling cone generated by two rays emanating from the apex \( [\delta_0] \) in the quotient space \( (\Qset, RW_2) \) is flat whenever one of the rays is generated by an absolutely continuous distribution. The key observation is that, after a unique optimal alignment ensured by Brenier’s theorem, all Wasserstein distances between any pair of points within the filling cone are induced by a fixed quadratic form in a global affine coordinate system.

\subsection{Setup and Notation}

Let \( [\mu], [\nu] \in \Qset \). Without loss of generality, we assume that both representative elements are centered and normalized to unit \( RW_2 \) norm, i.e.,
\[
\bar{\mu} = \bar{\nu} = 0,
\qquad
\|\mu\|_{RW_2} = \|\nu\|_{RW_2} = 1.
\]
Throughout the proof, we assume that \( \mu \) is absolutely continuous.

\subsection{Uniqueness of the Aligned Optimal Map}

\begin{lemma}[Uniqueness of the Optimal Transport Map]
\label{lem:unique}
If \( \mu \) is absolutely continuous, then the optimal transport plan between \( \mu \) and \( \nu \) for the quadratic cost is unique and given by \( (\Id,T)_\#\mu \), where \( T=\nabla\phi \) is the gradient of a convex function.
\end{lemma}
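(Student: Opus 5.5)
This is Brenier's theorem, and the plan is to derive it from Kantorovich duality and the structure of the support of optimal plans. The setting is $\mu,\nu\in\PR{2}{d}$ with $\mu$ absolutely continuous with respect to Lebesgue measure.

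First, existence. $\Gamma(\mu,\nu)$ is tight and weakly closed, and the cost $\|x-y\|^2$ is lower semicontinuous and bounded below, so the direct method gives an optimal plan $\gamma$. Its cost is finite because the second moments are finite.

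Next, expand $\|x-y\|^2=\|x\|^2+\|y\|^2-2\langle x,y\rangle$. The first two terms integrate to the same value for every coupling, so minimizing the quadratic cost is equivalent to maximizing $\int\langle x,y\rangle\,d\gamma$. Any optimal $\gamma$ has cyclically monotone support. By Rockafellar's theorem, $\operatorname{supp}\gamma\subset\partial\phi$ for some proper lower semicontinuous convex function $\phi:\RR^d\to\RR\cup\{+\infty\}$. Equivalently, take $\phi$ to be the Kantorovich potential from duality, normalized so that $\phi(x)+\phi^*(y)\ge\langle x,y\rangle$, with equality $\gamma$-almost everywhere.

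The key step is to show that $\partial\phi(x)$ is a singleton for $\mu$-a.e.\ $x$. Since $\gamma$ has first marginal $\mu$, $\mu$ is concentrated on the domain of $\partial\phi$, so $\mu$ gives full mass to $\operatorname{dom}\phi$. Absolute continuity then forces $\mu$ to give full mass to the interior of $\operatorname{dom}\phi$: the boundary of a convex set is Lebesgue-null, hence $\mu$-null. On that interior $\phi$ is locally Lipschitz, so Rademacher's theorem makes $\phi$ differentiable Lebesgue-a.e., hence $\mu$-a.e. At every differentiability point $\partial\phi(x)=\{\nabla\phi(x)\}$. Therefore $\gamma$ is concentrated on the graph of $T=\nabla\phi$, which gives $\gamma=(\Id,T)_\#\mu$.

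Uniqueness comes from the standard convexity trick. Suppose $\gamma_1,\gamma_2$ are both optimal. Then $\tfrac12(\gamma_1+\gamma_2)$ is optimal, since the objective is linear and the constraint set is convex. By the previous step this mixture is induced by a map, i.e.\ concentrated on the graph of some $\nabla\psi$. Each $\gamma_i$ is absolutely continuous with respect to the mixture, so each is also concentrated on that graph. Hence $\gamma_1=(\Id,\nabla\psi)_\#\mu=\gamma_2$.

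The main obstacle is the a.e.\ single-valuedness of $\partial\phi$. Handling $\phi$ that take the value $+\infty$ requires the boundary-null argument above. Existence of the convex potential, via Rockafellar's theorem or duality, is treated as standard from \citet{villani2003topics}.

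For the application, the normalization to a translation-invariant class changes nothing. Translation only adds an affine term to $\phi$, so in the aligned coordinates the unique optimal alignment of $[\mu]$ and $[\nu]$ is induced by this single map $T$.
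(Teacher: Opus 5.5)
Your proposal is correct and takes the same route as the paper, whose proof simply invokes Brenier's theorem for the quadratic cost. You reproduce the standard proof of that theorem: existence by the direct method, Rockafellar's theorem for the cyclically monotone support, $\mu$-a.e.\ single-valuedness of $\partial\phi$ via the Lebesgue-null boundary of $\mathrm{dom}\,\phi$ and Rademacher's theorem, and uniqueness by the mixture argument. Each of these steps is sound.
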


\begin{proof}
This is a direct consequence of Brenier’s theorem for the quadratic cost.
\end{proof}

\subsection{Geodesics in the Filling Cone}

\begin{lemma}[Form of Geodesics]
\label{lem:geodesic}
For any \( s_0,s_1 \ge 0 \), the \( W_2 \)-geodesic between
\[
\rho_0 := (s_0\,\Id)_\#\mu,
\qquad
\rho_1 := (s_1 T)_\#\mu
\]
is given by
\[
\rho_t = (F_t)_\#\mu,
\quad
F_t(x) := (1-t)s_0 x + t s_1 T(x),
\quad t\in[0,1].
\]
\end{lemma}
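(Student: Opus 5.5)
The plan is to show that the coupling $\gamma := (s_0\,\Id, s_1 T)_\#\mu$ is an optimal plan between $\rho_0$ and $\rho_1$, where $T=\nabla\phi$ is the Brenier map from $\mu$ to $\nu$ given by Lemma~\ref{lem:unique}. Once that is established, the standard characterization of constant-speed $W_2$-geodesics applies: displacement interpolation along an optimal plan is a geodesic, so $\rho_t = ((1-t)\pi_1 + t\pi_2)_\#\gamma = (F_t)_\#\mu$. Here I use that $\rho_0 = (s_0\Id)_\#\mu$ and $\rho_1 = (s_1 T)_\#\mu = (s_1\Id)_\#\nu$, so both measures lie on the rays $[[\mu]]$ and $[[\nu]]$.

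\textbf{Optimality.} I would use the Knott--Smith / Brenier criterion: a coupling is optimal for the quadratic cost if it is supported on the graph of the gradient of a convex function. The coupling $\gamma$ is supported on $\{(s_0x, s_1\nabla\phi(x))\}$.

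For $s_0>0$, substitute $y = s_0 x$. This gives the map
\[
S(y) = s_1 \nabla\phi(y/s_0) = \nabla\Bigl(\tfrac{s_1 s_0}{1}\,\phi(\cdot/s_0)\Bigr)(y)\cdot\tfrac{1}{s_0}\cdot s_0,
\]
more simply $S = \nabla \psi$ with $\psi(y) := s_0 s_1\,\phi(y/s_0)$. This $\psi$ is convex, being a positive multiple of a convex function composed with a linear map.

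The measure $(s_0\Id)_\#\mu$ is still absolutely continuous, and $S_\#\rho_0 = \rho_1$. By Brenier's theorem, $S$ is therefore the unique optimal map and $\gamma = (\Id,S)_\#\rho_0$ is optimal.

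For the degenerate cases:
\begin{itemize}
\item If $s_0=0$, then $\rho_0=\delta_0$. The only coupling is the product coupling, and $F_t(x) = t s_1 T(x)$ pushes $\mu$ to $(t\,\Id)_\#\rho_1$, which is the geodesic from a Dirac mass.
\item If $s_1=0$, the argument is symmetric.
\end{itemize}

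\textbf{Geodesic property.} Next I verify directly that $W_2(\rho_t,\rho_u) = |t-u|\,W_2(\rho_0,\rho_1)$. The upper bound comes from the coupling $(F_t,F_u)_\#\mu$, whose cost is $|t-u|^2\int\|s_1T - s_0\Id\|^2\,d\mu = |t-u|^2 W_2^2(\rho_0,\rho_1)$ by optimality. The reverse inequality follows from the triangle inequality applied to $\rho_0,\rho_t,\rho_u,\rho_1$. I will also note that the translation quotient does not interfere: all measures involved are centered, since $\bar\mu=\bar\nu=0$ and $F_t$ is linear in $(x,T(x))$. Hence by \eqref{eq:W2_decompose} we have $RW_2 = W_2$ on these representatives, and the curve is also an $RW_2$-geodesic.

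\textbf{Main obstacle.} The main point needing care is the convexity and scaling step, which shows the rescaled Brenier map remains a gradient of a convex function. In particular, the optimal plan between points on the two rays is just a reparametrization of the single map $T$. This is what makes all geodesics in the filling cone share one alignment. The rest is routine.
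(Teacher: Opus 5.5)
Your proposal is correct and follows the paper's route. The paper invokes McCann's displacement interpolation along the unique optimal plan $(\Id,T)_\#\mu$ composed with the endpoint scalings, and your observation that $s_1T(\cdot/s_0)=\nabla\psi$ with $\psi(y)=s_0s_1\phi(y/s_0)$ convex is exactly the justification, left implicit in the paper, that the scaled plan $(s_0\Id,s_1T)_\#\mu$ is still optimal. To earn the definite article ``the geodesic,'' you could add that uniqueness follows because the optimal plan is unique and every $W_2$-geodesic in $\RR^d$ is a displacement interpolation along some optimal plan.
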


\begin{proof}
This follows from McCann’s displacement interpolation, generated by the unique optimal plan \( (\Id,T)_\#\mu \), composed with the endpoint scalings \( x\mapsto s_0 x \) and \( y\mapsto s_1 y \). The result also extends trivially to the cases where \( s_0=0 \) or \( s_1=0 \).
\end{proof}

Thus, every point on the geodesic can be written in the affine form
\[
F_t(x) = a x + b T(x),
\qquad
(a,b) = \bigl((1-t)s_0,\,t s_1\bigr) \in \mathbb{R}^2_{\ge 0}.
\]
This provides a global affine coordinate chart on the filling cone.

For general \( a,b\ge0 \), define
\[
S_{a,b}(x) := a\,x + b\,T(x),
\qquad
\rho_{a,b} := (S_{a,b})_\#\mu.
\]
The filling cone is precisely the image of the map
\[
\Phi:\mathbb{R}^2_{\ge0}\to\Qset,
\qquad
\Phi(a,b)=\rho_{a,b}.
\]

\subsection{Exact \( W_2 \) Distances inside the Filling Cone}

\begin{lemma}[Exact Distance Formula]
\label{lem:cyclic-opt}
For any \( (a_1,b_1),(a_2,b_2)\in\mathbb{R}^2_{\ge0} \),
\[
W_2^2(\rho_{a_1,b_1},\rho_{a_2,b_2})
=
\int_{\mathbb{R}^d}
\bigl \|(a_1-a_2)x + (b_1-b_2)T(x)\bigr\|^2\,d\mu.
\]
\end{lemma}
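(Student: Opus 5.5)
The plan is to exhibit an explicit coupling between $\rho_{a_1,b_1}$ and $\rho_{a_2,b_2}$ whose cost equals the right-hand side, and then to show that this coupling is optimal. Optimality will come from the characterization of quadratic-cost optimal plans on $\PR{2}{d}$ as exactly those with cyclically monotone support (Knott--Smith / Rüschendorf, as in \citet{villani2009OT}). Write $S_j := S_{a_j,b_j}$ for $j=1,2$, and take the candidate plan
\[
\gamma := (S_1,S_2)_\#\mu .
\]
Its marginals are $(S_1)_\#\mu=\rho_{a_1,b_1}$ and $(S_2)_\#\mu=\rho_{a_2,b_2}$, so $\gamma\in\Gamma(\rho_{a_1,b_1},\rho_{a_2,b_2})$. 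Its cost is $\int\|S_1(x)-S_2(x)\|^2\,d\mu$, which is exactly the claimed integral. Finiteness follows from $\mu,\nu\in\PR{2}{d}$, since $\int\|T\|^2d\mu=\int\|y\|^2d\nu$. This immediately gives the inequality ``$\le$''.

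For the reverse inequality, I will verify cyclical monotonicity of the support of $\gamma$, which lies in the closure of $\{(S_1(x),S_2(x))\}$. Fix points $x_1,\dots,x_N$ in the support of $\mu$ with cyclic indexing $x_{N+1}=x_1$. It suffices to show
\[
\sum_{i=1}^N \bigl\langle S_1(x_i),\, S_2(x_i)-S_2(x_{i+1})\bigr\rangle \ge 0 .
\]
Expanding by bilinearity gives four cyclic sums:
\[
a_1a_2\sum\langle x_i,x_i-x_{i+1}\rangle,\qquad b_1b_2\sum\langle T x_i,Tx_i-Tx_{i+1}\rangle,
\]
\[
a_1b_2\sum\langle x_i,Tx_i-Tx_{i+1}\rangle,\qquad b_1a_2\sum\langle Tx_i,x_i-x_{i+1}\rangle .
\]

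The first two sums are nonnegative for any vectors, because $\sum_i\langle v_i,v_i-v_{i+1}\rangle=\tfrac12\sum_i\|v_i-v_{i+1}\|^2$. The third sum is nonnegative because, by Lemma~\ref{lem:unique}, $T=\nabla\phi$ with $\phi$ convex, so the graph of $T$ is cyclically monotone. The fourth sum equals $\sum_i\langle x_{i+1},Tx_{i+1}-Tx_i\rangle$ after reindexing, which is the cyclic-monotonicity sum for the reversed cycle, so it is also nonnegative. All four coefficients $a_1a_2,\ b_1b_2,\ a_1b_2,\ b_1a_2$ are nonnegative because $(a_j,b_j)\in\RR^2_{\ge0}$. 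Hence the whole expression is nonnegative, $\gamma$ is optimal, and equality holds.

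The main obstacle is the technical point that $T$ is only defined $\mu$-a.e., so the support of $\gamma$ is the closure of the graph of $(S_1,S_2)$ over a full-measure set. I would handle this in one of two ways. The first is to restrict to a full-measure set on which $\nabla\phi$ exists and pass to the closure, noting that cyclical monotonicity is preserved under limits. The second, more robust option is to replace $T$ by the subdifferential $\partial\phi$, whose graph is cyclically monotone everywhere, and run the same four-term computation with $y_i\in\partial\phi(x_i)$. I expect the nonnegativity requirement on the coefficients to be the essential place where the restriction to the quadrant $a,b\ge0$, i.e.\ to the filling cone, is used.
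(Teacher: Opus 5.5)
Your proposal is correct and follows the paper's proof essentially step for step. You use the same coupling $(S_{a_1,b_1},S_{a_2,b_2})_\#\mu$ and the same four-term bilinear expansion of the cyclic sum. The two diagonal sums are handled by the telescoping identity, and the two cross sums by cyclical monotonicity of the Brenier graph; you obtain the second cross sum by reversing the cycle, where the paper invokes the flipped graph $\{(T(x),x)\}$, which is the same fact. Nonnegativity of the coefficients comes from $a_j,b_j\ge 0$, exactly as in the paper. Your remarks on finiteness of the cost and on $T$ being defined only $\mu$-a.e.\ (via the closure of the graph, or via $\partial\phi$) supply details the paper leaves implicit.
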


\begin{proof}
Consider the transport plan 
\[
\pi:=\bigl(S_{a_1,b_1},S_{a_2,b_2}\bigr)_\#\mu.
\]
We show that \( \pi \) is the optimal plan by using the cyclic inequality
\[
\sum_{i=1}^m \langle x_{i+1},\,y_{i+1}-y_i\rangle \ge 0,
\]
where $x_{m+1}=x_1$, $y_{m+1}=y_1$.

Let $\Gamma=\{(x,T(x))\}$ be the support of $(\Id,T)_\#\mu$.  
Since $\Gamma$ is $c$-cyclically monotone,
\[
\Sigma_{xy}:=\sum_i \langle x_{i+1},T(x_{i+1})-T(x_i)\rangle \ge 0.
\]
Because the quadratic cost is symmetric, the flipped graph $\Gamma^\top=\{(T(x),x)\}$ is also $c$-cyclically monotone, giving
\[
\Sigma_{yx}:=\sum_i \langle T(x_{i+1}),x_{i+1}-x_i\rangle \ge 0.
\]
Set $y_i:=T(x_i)$ and define
\[
\begin{aligned}
u_i &:= S_{a_1,b_1}(x_i)=a_1x_i+b_1y_i, \\
v_i &:= S_{a_2,b_2}(x_i)=a_2x_i+b_2y_i.
\end{aligned}
\]
Then
\[
\begin{aligned}
\sum_i\langle u_{i+1}, v_{i+1}-v_i\rangle
&= a_1a_2\Sigma_{xx}+a_1b_2\Sigma_{xy} \\
&\quad + b_1a_2\Sigma_{yx}+b_1b_2\Sigma_{yy}.
\end{aligned}
\]
where
\[
\begin{aligned}
\Sigma_{xx} &:= \sum_i\langle x_{i+1},x_{i+1}-x_i\rangle, \\
\Sigma_{yy} &:= \sum_i\langle y_{i+1},y_{i+1}-y_i\rangle.
\end{aligned}
\]
A telescoping identity gives $\Sigma_{zz}=\frac12\sum_i\|z_{i+1}-z_i\|^2\ge0$ for $z=x,y$. 
Thus every $\Sigma$ term is nonnegative, and since $a_j,b_j\ge0$, we conclude
\[
\sum_i\langle u_{i+1},v_{i+1}-v_i\rangle\ge0.
\]
Hence the support of $\pi$ is $c$-cyclically monotone.  
By the standard characterization of optimal transport for the quadratic cost, $\pi$ is optimal, and its cost equals $W_2^2(\rho_{a_1,b_1},\rho_{a_2,b_2})$, yielding the desired expression.
\end{proof}

\subsection{Flatness of the Filling Cone}

\paragraph{Quadratic form.}
Expanding the square in Lemma~\ref{lem:cyclic-opt},
\[
\begin{aligned}
W_2^2(\rho_{a_1,b_1},\rho_{a_2,b_2})
&=
(a_1-a_2)^2A + (b_1-b_2)^2 C\\
&\quad
+2(a_1-a_2)(b_1-b_2)B.
\end{aligned}
\]

where $A=\int\|x\|^2\,d\mu,
\quad
B=\int\langle x,T(x)\rangle\,d\mu,
\quad
C=\int\|T(x)\|^2\,d\mu.
$

These constants depend only on \( (\mu,T) \).

\begin{proposition}[Flatness of the Filling Cone]
\label{prop:flat}
The pullback of the \( W_2 \)-metric under \( \Phi(a,b)=(S_{a,b})_\#\mu \) is the constant bilinear form
\[
g =
\begin{pmatrix}
A & B\\
B & C
\end{pmatrix}
\]
in the global affine coordinates \( (a,b) \). Consequently, the induced two-dimensional Riemannian manifold is flat, with zero Gaussian curvature.
\end{proposition}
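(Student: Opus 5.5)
The plan is to read the metric directly off Lemma~\ref{lem:cyclic-opt}. That lemma gives, for all pairs of coordinates in $\mathbb{R}^2_{\ge0}$,
\[
W_2^2(\rho_{a_1,b_1},\rho_{a_2,b_2}) = Q(a_1-a_2,\,b_1-b_2),
\qquad
Q(\alpha,\beta):=\alpha^2A+2\alpha\beta B+\beta^2C .
\]
This holds globally, not only infinitesimally. Hence the pullback of the metric is literally the distance of the constant quadratic form $g$ on $\mathbb{R}^2$. The metric tensor is then obtained by taking $(a_2,b_2)=(a_1+\epsilon\alpha,\,b_1+\epsilon\beta)$ and letting $\epsilon\to0$. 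This gives $\|d\Phi(\alpha,\beta)\|^2 = Q(\alpha,\beta)$, i.e.\ $g_{ij}$ is independent of the base point $(a_1,b_1)$.

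Next I check that $g$ is positive semidefinite, and positive definite in the generic case. By Cauchy--Schwarz in $L^2(\mu)$ we have $B^2\le AC$, with equality iff $T(x)=\lambda x$ $\mu$-a.e. Since $A=C=1$ after normalization, equality means $[\nu]$ lies on the ray $[[\mu]]$, i.e.\ the two rays coincide. In that degenerate case the filling cone is a single ray, which is trivially flat. Otherwise $g\succ0$ and defines a genuine Riemannian metric on the open quadrant.

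Flatness then follows because all Christoffel symbols of a constant metric vanish, so the Gaussian curvature is $0$. To get the stronger isometry claim, factor $g=L^\top L$ (Cholesky) and set $\Psi(a,b)=L(a,b)^\top$. Then
\[
W_2(\Phi(a_1,b_1),\Phi(a_2,b_2)) = \|\Psi(a_1,b_1)-\Psi(a_2,b_2)\|_2,
\]
so $\Phi\circ\Psi^{-1}$ is an isometry from the Euclidean cone $\Psi(\mathbb{R}^2_{\ge0})$ onto the filling cone. Its opening angle satisfies $\cos\theta=B/\sqrt{AC}$, which is consistent with Definition~\ref{def:rw_angles}.

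The main obstacle is passing from $W_2$ to $RW_2$ and making sure $\Phi$ is well defined and injective on the quotient. By the decomposition \eqref{eq:W2_decompose}, every $\rho_{a,b}$ is centered, because $\mu$ and $T_\#\mu=\nu$ are centered. So $RW_2=W_2$ on the filling cone, and no translation can improve any of these distances. Injectivity of $\Phi$ then follows from positive definiteness of $g$: distinct coordinates are at positive distance. I would also note that $T$ is chosen as the Brenier map from $\mu$ to the representative $\nu$ that attains $RW_2([\mu],[\nu])$. That representative is the centered one, so the coordinates are canonical.
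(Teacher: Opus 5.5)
Your proposal is correct and follows essentially the same route as the paper. Both arguments read off from Lemma~\ref{lem:cyclic-opt} that squared distances are a fixed quadratic form in the coordinate differences, so the metric coefficients are constant and a linear change of variables identifies the filling cone with a flat Euclidean sector. Your extra checks (centering so that $RW_2=W_2$ on the cone, nondegeneracy of $g$ via Cauchy--Schwarz, and the explicit Cholesky isometry) make explicit what the paper leaves implicit. In the Cauchy--Schwarz equality case, add that monotonicity of $T=\nabla\phi$ forces $\lambda>0$, so $\nu$ really lies on the ray $[[\mu]]$.
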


\begin{proof}
Lemma~\ref{lem:cyclic-opt} shows that squared distances depend only on the differences \( (a_1-a_2,b_1-b_2) \) through a fixed quadratic form. Hence the metric coefficients are constant in the \( (a,b) \) chart. A two-dimensional Riemannian metric with constant coefficients is isometric (up to a linear change of variables) to a Euclidean cone, and therefore has zero curvature.
\end{proof}

\section{Proof of the One-Dimensional Closed Form}
\label{sec:prf_one_dim_add}

In this section, we prove Proposition~\ref{prop:1d_closed_form} by deriving closed-form expressions for the orthogonal projection distance \( p \), and the relative Wasserstein angle \( \theta \) in one dimension.

Let
\[
\mu = \frac{1}{n}\sum_{i=1}^{n}\delta_{x_i}, 
\qquad 
\nu = \mathcal{N}(0,\sigma^2),
\]
where \( x_1 \le \cdots \le x_n \) are sorted samples, and consider the quadratic cost
\( c(x,y)=\tfrac12\|x-y\|^2 \).

\paragraph{Step 1: Quantile formula for \( W_2^2(\mu,\nu) \).}
In one dimension, the quadratic Wasserstein distance admits the quantile representation
\begin{equation}
\label{eq:w2-quantile}
W_2^2(\mu,\nu)
=
\int_0^1
\bigl(F_\mu^{-1}(t) - F_\nu^{-1}(t)\bigr)^2
\, dt .
\end{equation}
For the empirical measure \( \mu \),
\[
F_\mu^{-1}(t)=x_i
\quad \text{for } t\in\Big(\tfrac{i-1}{n},\tfrac{i}{n}\Big],
\]
and for \( \nu=\mathcal N(0,\sigma^2) \),
\[
F_\nu^{-1}(t)=\sigma \Phi^{-1}(t),
\]
where \( \Phi \) denotes the CDF of \( \mathcal N(0,1) \).

Substituting into \eqref{eq:w2-quantile} yields
\begin{equation}
\label{eq:sum-pieces}
W_2^2(\mu,\nu)
=
\sum_{i=1}^n
\int_{t_{i-1}}^{t_i}
\bigl(x_i-\sigma\Phi^{-1}(t)\bigr)^2
\, dt,
\quad t_i=\tfrac{i}{n}.
\end{equation}

\paragraph{Step 2: Change of variables and termwise integration.}
Let \( t=\Phi(z) \), so that \( dt=\phi(z)\,dz \), where \( \phi \) is the standard Gaussian density.
Define \( z_i := \Phi^{-1}(i/n) \), with \( z_0=-\infty \), \( z_n=+\infty \).
Then
\[
\int_{t_{i-1}}^{t_i}
\bigl(x_i-\sigma\Phi^{-1}(t)\bigr)^2
\, dt
=
\int_{z_{i-1}}^{z_i}
(x_i-\sigma z)^2\,\phi(z)\,dz .
\]

Expanding and integrating termwise gives
\[
\begin{aligned}
\int_{z_{i-1}}^{z_i}
(x_i-\sigma z)^2\,\phi(z)\,dz
&=
\frac{x_i^2}{n}
-2\sigma x_i\bigl(\phi(z_{i-1})-\phi(z_i)\bigr) \\
&\quad
+\sigma^2\!\bigl[-z\phi(z)+\Phi(z)\bigr]_{z_{i-1}}^{z_i}.
\end{aligned}
\]

Summing over \( i=1,\dots,n \) yields
\begin{equation}
\label{eq:w2-expanded}
W_2^2(\mu,\nu)
=
\frac{1}{n}\sum_{i=1}^n x_i^2
-2\sigma \sum_{i=1}^n x_i\bigl(\phi(z_{i-1})-\phi(z_i)\bigr)
+\sigma^2 .
\end{equation}

\paragraph{Step 3: Identification of the projection length.}
Define the empirical scale
\[
\sigma_\mu := \sqrt{\frac{1}{n}\sum_{i=1}^n x_i^2},
\]
and the \emph{projection length}
\begin{equation}
\label{eq:def_l}
l
:=
\sum_{i=1}^n
x_i\bigl(\phi(z_{i-1})-\phi(z_i)\bigr).
\end{equation}
Then \eqref{eq:w2-expanded} can be written as
\begin{equation}
\label{eq:w2-quadratic}
W_2^2(\mu,\mathcal N(0,\sigma^2))
=
\sigma_\mu^2 - 2\sigma l + \sigma^2 .
\end{equation}

\paragraph{Step 4: Projection distance and \( RW_2 \) angle.}
In the cone geometry of \( ( \Qset, RW_2) \), the projection length \( l \), the projection distance \( p \), and the relative angle \( \theta \) satisfy the Euclidean relations
\[
l = \|[\mu]\|_{RW_2}\cos\theta,
\qquad
\|[\mu]\|_{RW_2}=\sigma_\mu .
\]
Consequently,
\[
\theta
=
\arccos\!\left(\frac{l}{\sigma_\mu}\right),
\qquad
p
=
\sigma_\mu \sin\theta ,
\]
which proves Proposition~\ref{prop:1d_closed_form}. \qed

\section{Closed-Form Solutions for Other Distribution Families}
\label{sec:sol_other_dis_add}

\subsection{Closed-Form Solution for the Uniform Distribution in One Dimension}
\label{subsec:one_dim_uniform}

We extend the one-dimensional analysis to the family of uniform distributions and show that both the orthogonal projection distance and the relative Wasserstein angle
admit explicit closed-form expressions.
The derivation parallels the Gaussian case and relies on the quantile representation of the quadratic Wasserstein distance.

Let \( \mu \) be a one-dimensional empirical distribution with zero mean,
\[
\mu := \frac{1}{n}\sum_{i=1}^n \delta_{x_i},
\qquad \bar{\mu}=0,
\]
where \( x_1 \le \cdots \le x_n \) are sorted samples.
We consider the family of centered uniform distributions
\[
\mathrm{Unif}\!\left[-\tfrac{\beta}{2},\,\tfrac{\beta}{2}\right],
\qquad \beta \ge 0,
\]
which forms a ray in the \( RW_2 \) space.

\paragraph{Projection Length.}
Using the Euclidean structure of the filling cone, the projection length of \( [\mu] \)
onto the uniform ray is given by
\begin{align}
l_{\mathrm{unif}}
&=
\int_0^1
F_\mu^{-1}(u)\,
\Bigl(u-\tfrac12\Bigr)\,du \notag\\
&=
\sum_{i=1}^n
x_i
\int_{u_{i-1}}^{u_i}
\Bigl(u-\tfrac12\Bigr)\,du \notag\\
&=
\sum_{i=1}^n
x_i\,\frac{2i-n-1}{2n^2}.
\label{eq:one_dim_uniform_proj}
\end{align}
Importantly, \( l_{\mathrm{unif}} \) is independent of the scale parameter \( \beta \),
reflecting the conic structure of the uniform family in the \( RW_2 \) space.

\paragraph{Relative Wasserstein Angle and Projection Distance.}
The \( RW_2 \) norm of \( [\mu] \) is
\[
\|[\mu]\|_{RW_2}
=
\sigma_\mu
:=
\sqrt{\frac{1}{n}\sum_{i=1}^n x_i^2}.
\]
The relative Wasserstein angle and the orthogonal projection distance are therefore given by
\begin{equation}
\label{eq:one_dim_uniform_angle}
\theta_{\mathrm{unif}}
=
\arccos\!\left(\frac{l_{\mathrm{unif}}}{\sigma_\mu}\right),
\qquad
p_{\mathrm{unif}}
=
\sigma_\mu \sin\theta_{\mathrm{unif}}.
\end{equation}

\begin{proposition}[Closed-Form Projection for the Uniform Distribution]
\label{prop:1d_uniform_closed_form}
Let \( \mu \) be a one-dimensional empirical distribution with zero mean.
Then the projection length, projection distance, and relative Wasserstein angle of \( [\mu] \)
onto the uniform ray \( [[\mathrm{Unif}[-\beta/2,\beta/2]]] \) are given by
\[
\theta_{\mathrm{unif}}
=
\arccos\!\left(\frac{l_{\mathrm{unif}}}{\sigma_\mu}\right),
\qquad
p_{\mathrm{unif}}
=
\sigma_\mu \sin\theta_{\mathrm{unif}}.
\]
\end{proposition}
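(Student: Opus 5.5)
The plan mirrors the proof of Proposition~\ref{prop:1d_closed_form} (Steps 1--4 of Appendix~\ref{sec:prf_one_dim_add}), with the standard Gaussian quantile $\Phi^{-1}$ replaced by the uniform quantile. First, I would use the quantile representation \eqref{eq:w2-quantile}. For $\nu_\beta=\mathrm{Unif}[-\beta/2,\beta/2]$ we have $F_{\nu_\beta}^{-1}(u)=\beta(u-\tfrac12)$, and $F_\mu^{-1}(u)=x_i$ on $(u_{i-1},u_i]$ with $u_i=i/n$. Since both measures are centered, \eqref{eq:W2_decompose} gives $RW_2^2([\mu],[\nu_\beta])=W_2^2(\mu,\nu_\beta)$. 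Expanding the square then yields
\[
W_2^2(\mu,\nu_\beta)=\sigma_\mu^2-2\beta\sum_{i=1}^n x_i\int_{u_{i-1}}^{u_i}\Bigl(u-\tfrac12\Bigr)du+\beta^2\int_0^1\Bigl(u-\tfrac12\Bigr)^2du .
\]
The middle integral is elementary and equals $\frac{(2i-1)-n}{2n^2}$, which gives \eqref{eq:one_dim_uniform_proj}. The last integral is $1/12$.

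Second, I would put this in the Euclidean law-of-cosines form used in Step~3 of the Gaussian proof. The length of $[\nu_\beta]$ from the apex is $s:=\|[\nu_\beta]\|_{RW_2}=\beta/\sqrt{12}$. Substituting $\beta=\sqrt{12}\,s$ gives
\[
RW_2^2([\mu],[\nu_\beta])=\sigma_\mu^2-2s\,\bigl(\sqrt{12}\,l_{\mathrm{unif}}\bigr)+s^2 .
\]
Comparing with Definition~\ref{def:rw_angles}, with $a=\sigma_\mu$, $b=s$ and $c$ the distance above, the cosine of the angle is $(\sqrt{12}\,l_{\mathrm{unif}})/\sigma_\mu$. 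This quantity is independent of $s$, so the angle depends only on the ray, as it should.

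Third, I would justify the interpretation as an orthogonal projection using the flatness theorem. The uniform law is absolutely continuous, so the Theorem in Section~\ref{sec:rw2angle} (Proposition~\ref{prop:flat}) applies. The filling cone spanned by $[[\mu]]$ and the uniform ray is then isometric to a planar Euclidean sector. In that sector, the point on the ray nearest to $[\mu]$ is the foot of the perpendicular. Its distance from the apex is $\sigma_\mu\cos\theta$, and the distance from $[\mu]$ to the ray is $\sigma_\mu\sin\theta$. One can confirm this directly by minimizing the quadratic in $s$: the minimizer is $s^\ast=\sqrt{12}\,l_{\mathrm{unif}}$, and the minimum value is $\sigma_\mu^2-12\,l_{\mathrm{unif}}^2=\sigma_\mu^2\sin^2\theta$, provided $l_{\mathrm{unif}}\ge0$. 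The sign condition holds: $x_i$ is nondecreasing and the weights $\frac{2i-n-1}{2n^2}$ are increasing and sum to zero, so the Chebyshev sum inequality gives $l_{\mathrm{unif}}\ge0$. This yields $\theta_{\mathrm{unif}}$ and $p_{\mathrm{unif}}$ as stated.

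The main obstacle is the normalization in the second step. The quantity $l_{\mathrm{unif}}$ in \eqref{eq:one_dim_uniform_proj} is the inner product of $F_\mu^{-1}$ with the quantile of the uniform law of width $\beta=1$, and that law has $RW_2$ norm $1/\sqrt{12}$, not $1$. So either $l_{\mathrm{unif}}$ must be read as $\sqrt{12}$ times the displayed sum, or the arccos argument must carry the factor $\sqrt{12}$. I would state the proposition with this normalization made explicit, so that it is consistent with the Gaussian case, where $\Phi^{-1}$ already has unit norm. Apart from this point, every step is a direct substitution into the argument already carried out for Proposition~\ref{prop:1d_closed_form}.
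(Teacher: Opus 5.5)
Your argument is essentially the paper's own. The appendix pairs $F_\mu^{-1}$ with the uniform quantile $u-\tfrac12$, declares the result the projection length ``using the Euclidean structure of the filling cone,'' and reads off $\theta_{\mathrm{unif}}$ and $p_{\mathrm{unif}}$. Your law-of-cosines expansion, the explicit minimization over $s$, and the Chebyshev argument for $l_{\mathrm{unif}}\ge 0$ make that asserted step rigorous.

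Your normalization objection is also correct, and it identifies a genuine error in the printed statement, not in your proof. Since $\|u-\tfrac12\|_{L^2(0,1)}=1/\sqrt{12}$, Cauchy--Schwarz gives $l_{\mathrm{unif}}\le\sigma_\mu/\sqrt{12}$. The formula as written therefore never yields an angle below $\arccos(1/\sqrt{12})\approx 1.28$ rad, even for $\mu$ arbitrarily close to uniform, where the angle should be near $0$. The correct conclusion is $\cos\theta_{\mathrm{unif}}=\sqrt{12}\,l_{\mathrm{unif}}/\sigma_\mu$, exactly as you derived.
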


\subsection{Closed-Form Solution for the Logistic Distribution in One Dimension}
\label{subsec:one_dim_logistic}

We next consider the logistic distribution, another canonical location--scale family, and derive closed-form expressions for the orthogonal projection and the relative Wasserstein angle.

Let \( \mu \) be a one-dimensional empirical distribution with zero mean,
\[
\mu := \frac{1}{n}\sum_{i=1}^n \delta_{x_i},
\qquad \bar{\mu} = 0,
\]
with sorted samples \( x_1 \le \dots \le x_n \). We consider the family of centered logistic distributions
\[
\mathrm{Lg}(0,\sigma), \qquad \sigma \ge 0,
\]
which forms a ray in the \( RW_2 \) space.

\paragraph{Quantile Representation.}
The quantile function of the centered logistic distribution is
\[
\begin{aligned}
F^{-1}_{\mathrm{Lg}}(u)
&=
\sigma \,\psi_{\mathrm{Lg}}(u), \\
\psi_{\mathrm{Lg}}(u)
&=
\log\!\left(\frac{u}{1-u}\right),
\quad u \in (0,1).
\end{aligned}
\]

The empirical quantile function is again
\[
F_\mu^{-1}(u) = x_i,
\qquad u \in (u_{i-1},u_i].
\]

\paragraph{Derivation of the Projection Length.}
The orthogonal projection length of \( [\mu] \) onto the logistic ray is given by
\begin{align}
l_{\mathrm{Lg}}
&=
\int_0^1 F_\mu^{-1}(u)\,\psi_{\mathrm{Lg}}(u)\,du \notag\\
&=
\sum_{i=1}^n
x_i
\int_{u_{i-1}}^{u_i}
\log\!\left(\frac{u}{1-u}\right)\,du.
\label{eq:logistic_proj}
\end{align}
The antiderivative is explicit:
\[
\int \log\!\left(\frac{u}{1-u}\right)\,du
=
u\log u + (1-u)\log(1-u),
\]
which yields a closed-form expression for each summand in \eqref{eq:logistic_proj}.

\paragraph{Relative Wasserstein Angle and Projection Distance.}
Let
\[
\|[\mu]\|_{RW_2} = \sigma_\mu := \sqrt{\frac{1}{n}\sum_{i=1}^n x_i^2}.
\]
Therefore, 
\begin{equation}
\label{eq:logistic_angle}
\theta_{\mathrm{Lg}}
=
\arccos\!\left(\frac{l_{\mathrm{Lg}}}{\sigma_\mu}\right),
\qquad
p_{\mathrm{Lg}}
=
\sigma_\mu \sin\theta_{\mathrm{Lg}}.
\end{equation}

\subsection{Closed-Form Solution for the Laplace Distribution in One Dimension}
\label{subsec:one_dim_laplace}

We now consider the Laplace distribution, a canonical example of a heavy-tailed family, and derive explicit formulas for the projection distance and the relative Wasserstein angle.

Let \( \mu \) be a one-dimensional empirical distribution with zero mean,
\[
\mu := \frac{1}{n}\sum_{i=1}^n \delta_{x_i},
\qquad \bar{\mu} = 0,
\]
where the samples are sorted as \( x_1 \le \dots \le x_n \). We consider the family of centered Laplace distributions
\[
\mathrm{Lpc}(0,b), \qquad b \ge 0,
\]
which forms a ray in the \( RW_2 \) space.

\paragraph{Quantile Representation.}
The quantile function of the centered Laplace distribution is given by
\[
\begin{aligned}
F^{-1}_{\mathrm{Lpc}}(u)
&=
b\,\psi_{\mathrm{Lpc}}(u), \\
\psi_{\mathrm{Lpc}}(u)
&=
\begin{cases}
\log(2u), & 0<u\le \tfrac12,\\[4pt]
-\log\!\big(2(1-u)\big), & \tfrac12<u<1.
\end{cases}
\end{aligned}
\]

As before, the empirical quantile function satisfies
\[
F_\mu^{-1}(u) = x_i,
\qquad u \in (u_{i-1},u_i],
\quad u_i=\tfrac{i}{n}.
\]

\paragraph{Derivation of the Projection length.}
Using the Euclidean structure of the filling cone, the orthogonal projection of \( [\mu] \) onto the Laplace ray is given by the inner product between the quantile functions,
\begin{align}
l_{\mathrm{Lpc}}
&=
\int_0^1 F_\mu^{-1}(u)\,\psi_{\mathrm{Lpc}}(u)\,du \notag\\
&=
\sum_{i=1}^n
x_i
\int_{u_{i-1}}^{u_i}
\psi_{\mathrm{Lpc}}(u)\,du.
\label{eq:laplace_proj}
\end{align}
The antiderivative of \( \psi_{\mathrm{Lpc}} \) is explicit and piecewise:
\[
\begin{aligned}
\int \log(2u)\,du
&= u\log(2u)-u, \\
\int -\log\!\big(2(1-u)\big)\,du
&= (1-u)\log\!\big(2(1-u)\big)-(1-u).
\end{aligned}
\]
Therefore, each term in \eqref{eq:laplace_proj} admits a closed-form expression depending only on the interval endpoints \( u_{i-1},u_i \). Importantly, the projection length \( l_{\mathrm{Lpc}} \) is independent of the scale parameter \( b \), reflecting the conic structure of the Laplace family in the \( RW_2 \) space.

\paragraph{Relative Wasserstein Angle and Projection Distance.}
The \( RW_2 \) norm of \( [\mu] \) is
\[
\|[\mu]\|_{RW_2}
=
\sigma_\mu
:=
\sqrt{\frac{1}{n}\sum_{i=1}^n x_i^2}.
\]
Consequently, 
\[
\theta_{\mathrm{Lpc}}
=
\arccos\!\left(\frac{l_{\mathrm{Lpc}}}{\sigma_\mu}\right)
\qquad
p_{\mathrm{Lpc}}
=
\sigma_\mu \sin\theta_{\mathrm{Lpc}}.
\]

\section{Coordinate-Separable in the PCA basis}
\label{sec:sol_Coordinate_add}

\paragraph{Coordinate-separable case.}
Let \( \bar{\mu} := \mathbb{E}_\mu[x] \) and suppose the empirical covariance of \( \mu \) admits the eigendecomposition
\[
\Sigma_\mu
=
U\,\mathrm{diag}(\sigma_1^2,\ldots,\sigma_d^2)\,U^\top,
\]
where \( U=[u_1,\ldots,u_d] \) is an orthonormal matrix and \( \sigma_k^2>0 \).
Introduce the PCA coordinates
\[
y_i := U^\top(x_i-\bar{\mu})
=
\begin{bmatrix}
y_{i,1} & \cdots & y_{i,d}
\end{bmatrix}^\top,
\quad i=1,\dots,n,
\]
and define the one-dimensional empirical measures
\[
\mu_k := \frac{1}{n}\sum_{i=1}^n \delta_{y_{i,k}},
\quad
\nu_k := \mathcal N(0,\sigma_k^2),
\quad k=1,\dots,d.
\]

We say that \( \mu \) is \emph{coordinate-separable in the PCA basis} if its equivalence class factorizes as
\[
[\mu]
=
\bigotimes_{k=1}^d [\mu_k].
\]
In this case, the quadratic cost is additive across coordinates, and the optimal transport plan between \( \mu \) and any centered Gaussian with diagonal covariance in the PCA basis factorizes accordingly.
As a consequence, the relative Wasserstein distance decomposes exactly as
\begin{equation}
\label{eq:w2-decompose-exact}
RW_2^2\!\big([\mu],[\mathcal N(0,\Sigma_\mu)]\big)
=
\sum_{k=1}^d
RW_2^2\!\big([\mu_k],[\nu_k]\big).
\end{equation}

Applying the one-dimensional closed-form formula to each marginal yields
\[
\begin{aligned}
&RW_2^2\!\big([\mu],[\mathcal N(0,\Sigma_\mu)]\big)
= \\
\sum_{k=1}^d
\Bigg[
\frac{1}{n}\sum_{i=1}^n y_{i,k}^2
&- 2\,\sigma_k
\sum_{i=1}^n
y_{i,k}\bigl(\phi(z_{i-1})-\phi(z_i)\bigr) + \sigma_k^2\Bigg].
\end{aligned}
\]
where \( z_i := \Phi^{-1}(i/n) \). \( \phi \) and \( \Phi \) denote the standard normal density function and distribution function, respectively.



\section{An Example Showing the Filling Cone Cannot Be Extended to Three Rays}
\label{sec:three_rays_add}

In this section, we present an example showing that the filling-cone construction developed for two rays in the \( RW_2 \) space cannot, in general, be extended to three rays. Importantly, the obstruction is not the absence of a Euclidean isometric embedding of the three rays themselves, but rather the fact that the notion of a \emph{filling cone} is no longer well-defined once more than two rays are involved.

\paragraph{Setup.}
Let \( \gamma \) denote the one-dimensional standard Gaussian distribution on the \( x \)-axis in \( \mathbb{R}^2 \),
\[
\gamma := \mathcal N(0,1)\otimes \delta_0.
\]
For \( \theta\in[0,2\pi) \), let \( R_\theta \) be the planar rotation by angle \( \theta \), and define
\[
\mu_\theta := (R_\theta)_\# \gamma.
\]
We consider the three distributions
\[
\mu_1 := \mu_{0},\qquad
\mu_2 := \mu_{\pi/3},\qquad
\mu_3 := \mu_{2\pi/3},
\]
corresponding to rotations by \( 0 \), \( \pi/3 \), and \( 2\pi/3 \), respectively. Each \( \mu_i \) generates a ray \( [[\mu_i]] \) in the quotient space \( (\Qset, RW_2) \).

\paragraph{Pairwise filling cones.}
For any pair \( (\mu_i,\mu_j) \), the results of Section~\ref{subsec:prf_flat_add} apply. In particular, since each \( \mu_i \) is obtained from \( \gamma \) by a rigid rotation, the optimal transport plan between \( \mu_i \) and \( \mu_j \) is unique and induced by the rotation \( R_{\theta_j-\theta_i} \). Consequently, the filling cone generated by the two rays \( [[\mu_i]] \) and \( [[\mu_j]] \) is well-defined and flat, and the corresponding \( RW_2 \) angle equals \( |\theta_i-\theta_j| \).

\paragraph{Failure for three rays.}
We now explain why the filling cone cannot be consistently defined for the triple \( ([[\mu_1]],[[\mu_2]],[[\mu_3]]) \).

Because the optimal transport plan between each pair \( (\mu_i,\mu_j) \) is unique, the transport maps
\[
T_{1\to2},\qquad T_{2\to3},\qquad T_{1\to3}
\]
are uniquely determined and given by rigid rotations. In particular, the quantile (or Monge) structure along the sequence
\[
\mu_1 \;\longrightarrow\; \mu_2 \;\longrightarrow\; \mu_3
\]
is fixed and cannot be altered.

However, if one attempts to define a three-ray filling cone by requiring that all intermediate points arise from affine combinations of the form
\[
x \;\mapsto\; a\,x + b\,T_{1\to2}(x) + c\,T_{1\to3}(x),
\]
one immediately encounters an inconsistency: the intermediate distributions generated along the pairwise filling cones \( (\mu_1,\mu_2) \) and \( (\mu_2,\mu_3) \) do not agree with those generated along \( (\mu_1,\mu_3) \). In other words, there is no single family of interpolating measures whose pairwise restrictions coincide with all three uniquely determined optimal transport plans.

\section{One Example where the Nearest Gaussian Does Not Share Eigenvalues or Eigenvectors}
\label{sec:not_sharing_add}

This section gives a concrete example showing that the $W_2$-nearest Gaussian
approximation of an empirical measure may share neither the same eigenvalues
nor the same eigenvectors of the moment-matching Gaussian.

Throughout, for a finite empirical distribution
$\mu=\frac1n\sum_{i=1}^n\delta_{x_i}$, we denote its mean by $\bar\mu$ and its
empirical covariance by $\Sigma_\mu$. The \emph{moment-matching Gaussian} is
$\mathcal N(\bar\mu,\Sigma_\mu)$.


\subsection{A two-dimensional counterexample}
\label{subsec:not_same_eigvecs}

Consider the empirical distribution supported on three points
\[
\gamma \;=\; \tfrac13\bigl(\delta_{(0,0)}+\delta_{(4,0)}+\delta_{(0,3)}\bigr),
\]
with mean
\[
\bar\gamma=\Bigl(\tfrac43,\,1\Bigr).
\]
Define the centered and $RW_2$-normalized distribution
\[
\mu
\;:=\;
\frac{\gamma-\bar\gamma}{\|\gamma-\bar\gamma\|_{RW_2}},
\]
so that $\bar\mu=0$ and $\|\mu\|_{RW_2}=1$.

\paragraph{Empirical covariance and moment matching.}
A direct computation shows that the empirical covariance of $\mu$ is
\[
\Sigma_\mu
= \frac{1}{50}
\begin{pmatrix}
32 & -12\\
-12 & 18
\end{pmatrix}.
\]
which has trace one and a nonzero off-diagonal entry.
\paragraph{Eigenvectors: rotation angles.}
In two dimensions, any symmetric matrix admits an eigen-decomposition
\[
\Sigma_\mu = R(\theta_{\mathrm{MM}})\,\Lambda\,R(\theta_{\mathrm{MM}})^\top,
\quad
R(\theta)=
\begin{pmatrix}
\cos\theta & -\sin\theta\\
\sin\theta & \cos\theta
\end{pmatrix},
\]
where $\theta_{\mathrm{MM}}\in[0,\pi)$ determines the eigenvectors.
Equivalently,
\[
\theta_{\mathrm{MM}}
=
\frac12\arctan\!\Bigl(
\frac{2\Sigma_{\mu,12}}{\Sigma_{\mu,11}-\Sigma_{\mu,22}}
\Bigr)
\quad (\mathrm{mod}\ \pi).
\]
Substituting the entries of $\Sigma_\mu$ yields
\[
\theta_{\mathrm{MM}} \approx 0.8340\,\pi.
\]

To identify the $RW_2$-nearest Gaussian direction, we perform a brute-force search
over the range $\lambda\in(0,1),\ \theta\in[0,\pi)$, 
\[
\Sigma(\lambda,\theta)
 =
R(\theta)\,\mathrm{diag}(\lambda,1-\lambda)\,R(\theta)^\top,
\]
which parameterizes all zero-mean Gaussian directions with unit trace.
Using Monte-Carlo optimal transport combined with exact discrete OT, we obtain a
minimizing Gaussian with rotation angle
\[
\theta_\star \approx 0.7983\,\pi.
\]
Since $\theta_\star\neq\theta_{\mathrm{MM}}$, the $RW_2$-nearest Gaussian does not
share the same eigenvectors as the moment-matching Gaussian.

\paragraph{Eigenvalues.}
The discrepancy is not limited to eigenvectors. The corresponding eigenvalues
(trace one in both cases) are
\[
\begin{aligned}
\operatorname{spec}(\Sigma_\mu)
&=\{\,0.7778,\;0.2222\,\}, \\
\operatorname{spec}(\Sigma_\star)
&=\{\,0.9411,\;0.0589\,\}.
\end{aligned}
\]

Thus, the $RW_2$-nearest Gaussian is significantly more anisotropic than the
moment-matching Gaussian, and the two Gaussians do not share the same eigenvalues.

\paragraph{Strict improvement in $RW_2$.}
The associated $RW_2$ energies satisfy
\[
\begin{aligned}
W_2^2\!\bigl(\mu,\mathcal N(0,\Sigma_\mu)\bigr)
&\approx 0.4865, \\
W_2^2\!\bigl(\mu,\mathcal N(0,\Sigma_\star)\bigr)
&\approx 0.4639.
\end{aligned}
\]
yielding a strictly positive gap
\[
W_2^2(\text{nearest Gaussian})
<
W_2^2(\text{moment-matching Gaussian})
\]
of approximately $2.26\times 10^{-2}$.
This confirms that the moment-matching Gaussian does \emph{not} lie on the nearest
Gaussian ray in the $RW_2$ space.

\paragraph{Energy landscape visualization.}
Figure~\ref{fig:rw2_energy_landscape} visualizes the $RW_2$ energy landscape
\[
(\lambda,\theta)
\;\longmapsto\;
W_2^2\!\bigl(\mu,\mathcal N(0,\Sigma(\lambda,\theta))\bigr)
\]
over Gaussian directions with unit trace.
The moment-matching Gaussian and the $RW_2$-nearest Gaussian are marked explicitly.
The figure shows that the moment-matching direction is not a local minimizer of the
$RW_2$ objective and that a different combination of eigenvalues and eigenvectors
achieves a strictly smaller transport cost.

\begin{figure}[h]
    \centering
    \includegraphics[width=0.5\textwidth]{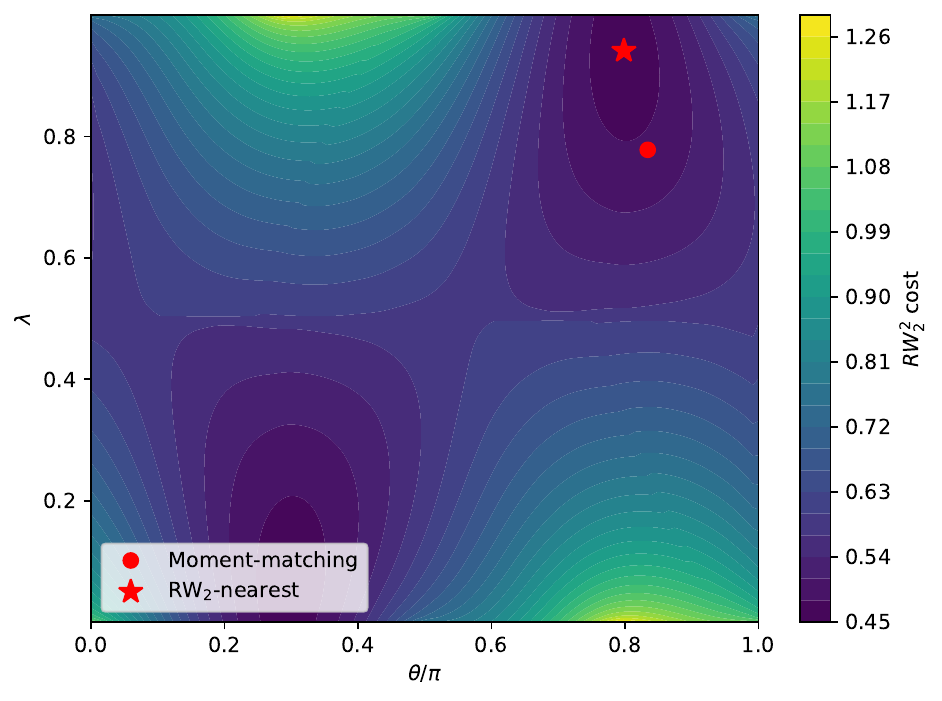}
    \caption{
    $RW_2$ energy landscape over Gaussian directions.
    The contour plot shows the value of
    $W_2^2\!\bigl(\mu,\mathcal N(0,\Sigma(\lambda,\theta))\bigr)$
    as a function of the eigenvalue parameter $\lambda$ and the rotation angle $\theta$.
    The red dot denotes the moment-matching Gaussian, while the red star denotes the
    $RW_2$-nearest Gaussian.
    The latter achieves a strictly smaller $RW_2$ distance, demonstrating that the
    moment-matching Gaussian does not lie on the nearest Gaussian ray.
    }
    \label{fig:rw2_energy_landscape}
\end{figure}

\paragraph{Takeaway.}
This example provides a concrete, normalized numerical counterexample showing that,
even in two dimensions and on the $RW_2$ unit sphere, the $W_2$-nearest Gaussian
approximation of an empirical distribution may share neither the eigenvectors nor the eigenvalues.

\section{Relative Wasserstein Angles for Multi-Center Gaussian Mixtures}
\label{app:rw2_gmm_grid}

In this appendix, we illustrate the relative Wasserstein angle between empirical
Gaussian mixture distributions and their nearest Gaussian approximations.
The purpose of this experiment is to visualize how increasing multimodality
manifests as non-Gaussianity when quantified by the \( RW_2 \) angle.

\paragraph{Experimental setup.}
We consider two-dimensional empirical distributions generated from
Gaussian mixture models whose component means are arranged on a regular grid.
For a given pair \( (r,c) \), the mixture consists of \( r \times c \) Gaussian
components with equal weights.
Each component has isotropic covariance \( I \), and the component means are
placed on a uniform grid with fixed spacing.
As \( r \) and \( c \) increase, the resulting distribution exhibits progressively
stronger multimodal structure.

\paragraph{Nearest Gaussian and angle computation.}
Given an empirical sample \( \mu \), we compute its empirical covariance
\( \Sigma_\mu \) and define the reference Gaussian
\[
\nu = \mathcal{N}(0, \Sigma_\mu).
\]
To approximate the \( RW_2 \) distance between \( \mu \) and \( \nu \),
we draw Monte Carlo samples from \( \nu \) and solve a discrete optimal transport
problem between the empirical samples and the Gaussian samples.
The resulting transport cost is then used to compute the corresponding
relative Wasserstein angle.

\begin{figure*}[h]
    \centering
    \includegraphics[width=0.95\textwidth]{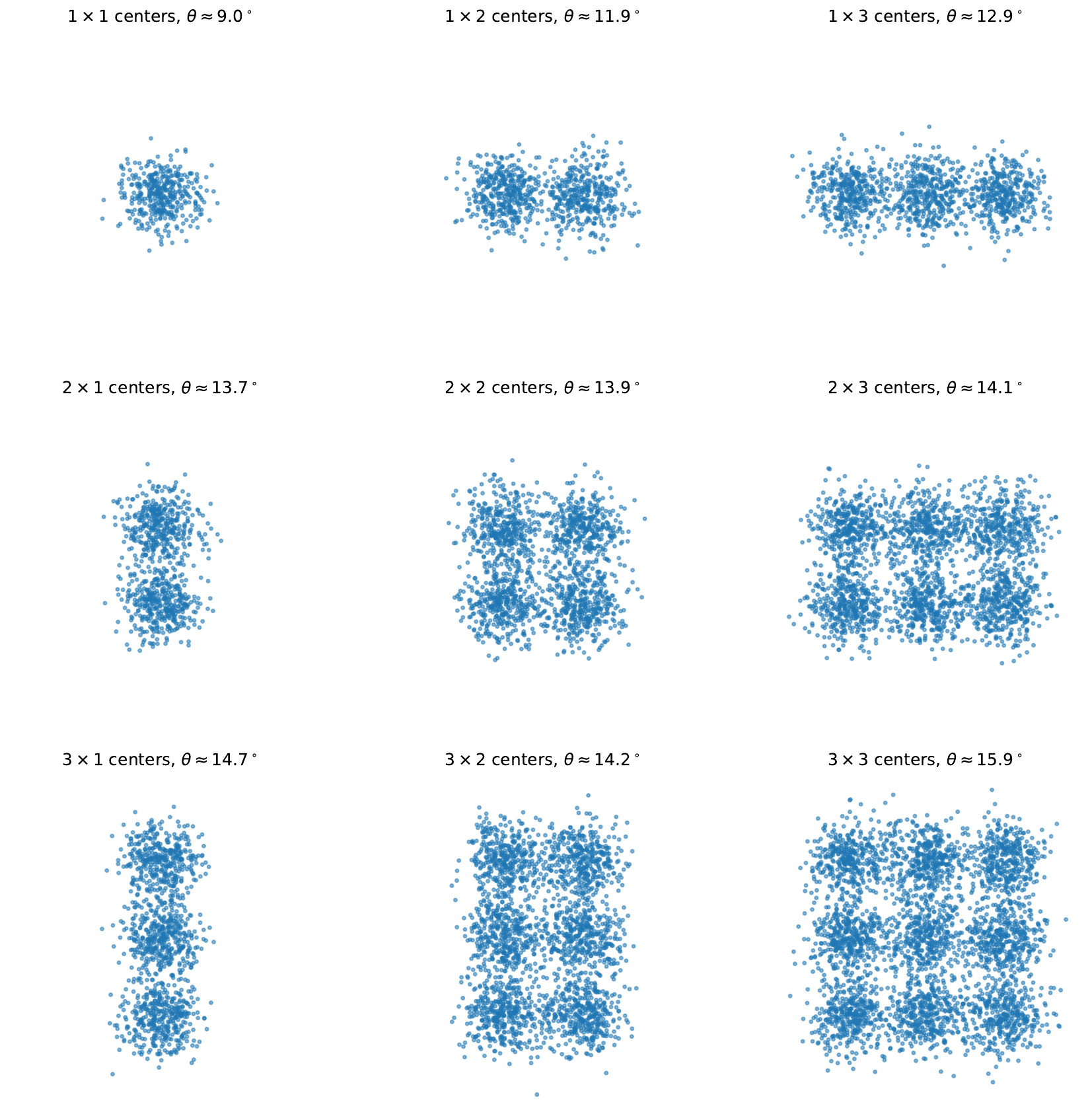}
    \caption{
    Relative Wasserstein angles for two-dimensional Gaussian mixture distributions
    with multiple centers arranged on a regular grid.
    Each panel shows empirical samples from a Gaussian mixture with
    \( r \times c \) components, together with the corresponding \( RW_2 \) angle
    between the empirical distribution and its covariance-matching Gaussian.
    As the number of mixture components increases, the Wasserstein angle increases,
    indicating growing deviation from Gaussianity due to multimodality.
    }
    \label{fig:rw2_gmm_grid}
\end{figure*}

\paragraph{Results.}
Figure~\ref{fig:rw2_gmm_grid} visualizes the empirical samples together with
their associated \( RW_2 \) angles for different grid configurations.
When the distribution consists of a single Gaussian component, the angle is
close to zero, indicating near-Gaussian behavior.
As the number of mixture components increases, the Wasserstein angle increases
monotonically, reflecting a growing deviation from any single Gaussian model.
This experiment demonstrates the sensitivity of the \( RW_2 \) angle to
multimodality and provides an intuitive geometric interpretation of
non-Gaussianity in the two-dimensional setting.

\section{Stochastic Evaluation of the \( RW_2 \) Distance}
\label{sec:rw2_eval_appendix}

In this appendix, we present the algorithm used to evaluate the $RW_2$ distance between an empirical distribution and a Gaussian distribution with fixed covariance. This routine forms the computational core of the high-dimensional optimization scheme.

\subsection{Semi-Discrete Dual Formulation}

Let
\[
\mu = \frac{1}{n}\sum_{i=1}^n \delta_{x_i},
\qquad
\nu = \mathcal N(0,\Sigma),
\]
where the samples $\{x_i\}_{i=1}^n$ are centered. Under the quadratic cost
$c(x,y)=\tfrac12\|x-y\|^2$, the semi-discrete dual formulation of the \( RW_2 \) distance is
\[
\begin{aligned}
RW_2(\mu,\nu)
&=
\sup_{f\in\mathbb R^n}
\frac{1}{n}\sum_{i=1}^n f_i \\
&\quad
+ \mathbb E_{Y\sim\nu}
\Bigl[
\min_{1\le i\le n}
\bigl(\tfrac12\|x_i-Y\|^2 - f_i\bigr)
\Bigr].
\end{aligned}
\]

The dual variable $f=(f_1,\dots,f_n)$ is associated with the support points of the empirical measure.

\subsection{Stochastic Dual Ascent Algorithm}

\begin{algorithm}[H]
\caption{Stochastic Evaluation of \( RW_2(\mu,\mathcal N(0,\Sigma)) \)}
\label{alg:rw2_eval}
\begin{algorithmic}[1]
\REQUIRE Samples $\{x_i\}_{i=1}^n$, covariance matrix $\Sigma$,
batch size $m$, dual steps $K_1$, stepsize $\eta_f$.
\ENSURE Approximate dual potential $f^\star$ and distance estimate $RW_2(\mu,\mathcal N(0,\Sigma))$.

\STATE Initialize $f^{(0)}\in\mathbb{R}^n$
\FOR{$k=0,\dots,K_1-1$}
    \STATE Sample $\xi_\ell\sim\mathcal N(0,I_d)$ and set
    $Y_\ell=\Sigma^{1/2}\xi_\ell$, $\ell=1,\dots,m$
    \FOR{$\ell=1,\dots,m$}
        \STATE $i_\ell\in\arg\min_i
        \bigl(\tfrac12\|x_i-Y_\ell\|^2-f^{(k)}_i\bigr)$
    \ENDFOR
    \STATE $\widehat{\nabla}_f L \leftarrow
    \frac{1}{n}\mathbf{1}
    -\frac{1}{m}\sum_{\ell=1}^m e_{i_\ell}$
    \STATE $f^{(k+1)}\leftarrow f^{(k)}+\eta_f\,\widehat{\nabla}_f L$
    \STATE $f^{(k+1)}\leftarrow f^{(k+1)}-\frac{1}{n}\sum_i f^{(k+1)}_i$
\ENDFOR
\STATE Estimate $RW_2^2(\mu,\mathcal N(0,\Sigma))\approx 2\,L(f^\star,\Sigma)$
\STATE \OUTPUT $f^\star$, $RW_2(\mu,\mathcal N(0,\Sigma))$
\end{algorithmic}
\end{algorithm}

\paragraph{Remarks.}
The estimator for the dual gradient is a valid supergradient since the inner minimization is piecewise linear in $f$. The algorithm avoids explicit discretization of the Gaussian measure and scales well to high dimensions.

\end{document}